\newcommand{\scrbar}[1]{\overline{\mathcal{#1}}}
\DeclareMathOperator{\E}{\mathbf{E}}
\renewcommand{\P}{\operatorname{\mathbf{P}}}
\newcommand{\tr}{\operatorname{tr}}
\newcommand{\argmin}{\operatornamewithlimits{arg~min}}
\newcommand{\diag}{\operatorname{diag}}
\DeclarePairedDelimiter{\norm}{\lVert}{\rVert}
\DeclarePairedDelimiter{\abs}{\lvert}{\rvert}
\DeclarePairedDelimiter{\braces}{\{}{\}}
\DeclarePairedDelimiter{\parens}{(}{)}
\DeclarePairedDelimiter{\brackets}{[}{]}
\DeclarePairedDelimiterX{\ip}[2]{\langle}{\rangle}{#1,#2}
\DeclarePairedDelimiterXPP{\normsub}[2]{}{\lVert}{\rVert}{_{#2}}{#1}
\DeclarePairedDelimiterXPP{\ipsub}[3]{}{\langle}{\rangle}{_{#3}}{#1,#2}
\DeclarePairedDelimiterXPP{\ipHS}[2]{}{\langle}{\rangle}{_{\mathrm{HS}}}{#1, #2}
\DeclarePairedDelimiterXPP{\normHS}[1]{}{\lVert}{\rVert}{_{\mathrm{HS}}}{#1}
\DeclarePairedDelimiterXPP{\ipF}[2]{}{\langle}{\rangle}{_{\mathrm{F}}}{#1, #2}
\DeclarePairedDelimiterXPP{\normF}[1]{}{\lVert}{\rVert}{_{\mathrm{F}}}{#1}
\DeclarePairedDelimiterXPP{\dkl}[2]{\operatorname{D_{KL}}}{(}{)}{}{#1 \: \delimsize\Vert \: #2}
\DeclarePairedDelimiterXPP{\restr}[2]{}{{}}{\vert}{_{#2}}{#1}
\newcommand{\R}{\mathbf{R}}
\newcommand{\var}{\operatorname{var}}
\newcommand{\indicator}[1]{\mathbf{1}_{\{ #1 \}}}
\newcommand{\spn}{\operatorname{span}}
\newcommand{\given}{\:\vert\:}
\newcommand{\sign}{\operatorname{sign}}
\newcommand{\simiid}{\overset{\mathclap{\text{i.i.d.}}}{\sim}}
\newcommand\ipH[2]{\ipsub{#1}{#2}{\scrH}}
\newcommand\normH[1]{\normsub{#1}{\scrH}}
\newtheorem{lemma}{Lemma}
\newtheorem{theorem}{Theorem}
\newtheorem{corollary}{Corollary}
\Crefname{assumption}{Assumption}{Assumptions}
\begin{document}
% If your paper is accepted and the title of your paper is very long,
% the style will print as headings an error message. Use the following
% command to supply a shorter title of your paper so that it can be
% used as headings.
%
\runningtitle{Harmless interpolation with structured features}

% If your paper is accepted and the number of authors is large, the
% style will print as headings an error message. Use the following
% command to supply a shorter version of the authors names so that
% they can be used as headings (for example, use only the surnames)
%
%\runningauthor{Surname 1, Surname 2, Surname 3, ...., Surname n}

 \twocolumn[

 \aistatstitle
{Harmless interpolation in regression and classification\\ with structured features}

% Change for AISTATS
% \date{}
% \author[1]{Andrew D.\ McRae}
% \author[2]{Santhosh Karnik}
% \author[1]{Mark A. Davenport}
% \author[1,3]{Vidya Muthukumar}
% \affil[1]{School of Electrical and Computer Engineering, Georgia Tech\authorcr (\texttt{\{admcrae, mdav, vmuthukumar8\}@gatech.edu})}
% \affil[2]{Department of Computational Mathematics, Science, \& Engineering, \authorcr  Michigan State University (\texttt{karniksa@msu.edu})}
% \affil[3]{School of Industrial and Systems Engineering, Georgia Tech}
 \aistatsauthor{ Andrew D.\ McRae \And Santhosh Karnik \And  Mark A.\ Davenport \And Vidya Muthukumar}

 \aistatsaddress{ Georgia Tech  \And  Michigan State University \And Georgia Tech \And Georgia Tech } ]

% \maketitle

\begin{abstract}
	Overparametrized neural networks tend to perfectly fit noisy training data yet generalize well on test data. Inspired by this empirical observation, recent work has sought to understand this phenomenon of \emph{benign overfitting} or \emph{harmless interpolation} in the much simpler linear model. Previous theoretical work critically assumes that either the data features are statistically independent or the input data is high-dimensional; this precludes general nonparametric settings with structured feature maps. In this paper, we present a general and flexible framework for upper bounding regression and classification risk in a reproducing kernel Hilbert space. A key contribution is that our framework describes precise sufficient conditions on the data Gram matrix under which harmless interpolation occurs. Our results recover prior independent-features results (with a much simpler analysis), but they furthermore show that harmless interpolation can occur in more general settings such as features that are a bounded orthonormal system. Furthermore, our results show an asymptotic separation between classification and regression performance in a manner that was previously only shown for Gaussian features. 
\end{abstract}

\begin{figure*}[t]
\centering
\begin{subfigure}[t]{0.32\textwidth}
    \centering
    \includegraphics[width=0.99\textwidth]{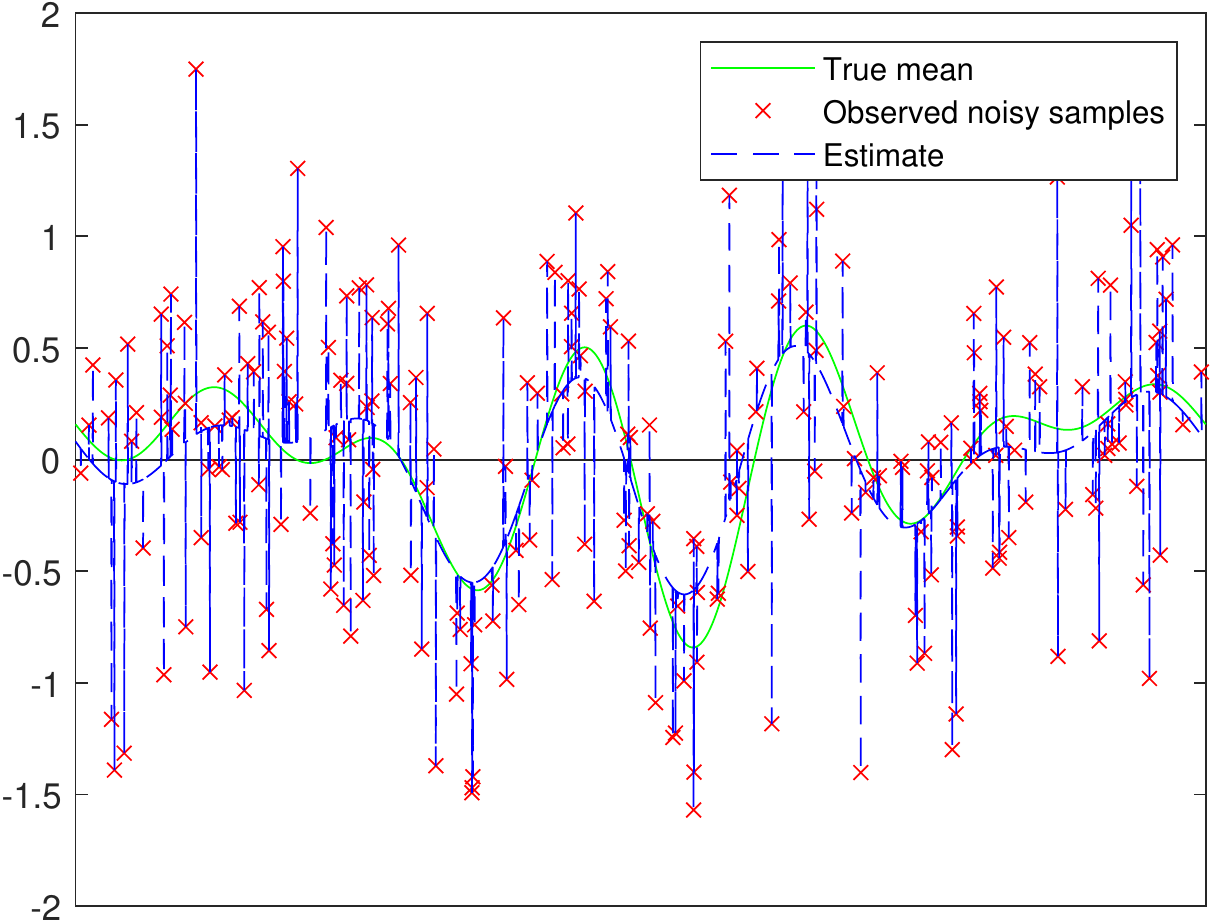}
    \caption{\small Gaussian-noise observations}
\end{subfigure}\hfill
\begin{subfigure}[t]{0.32\textwidth}
    \centering
    \includegraphics[width=0.99\textwidth]{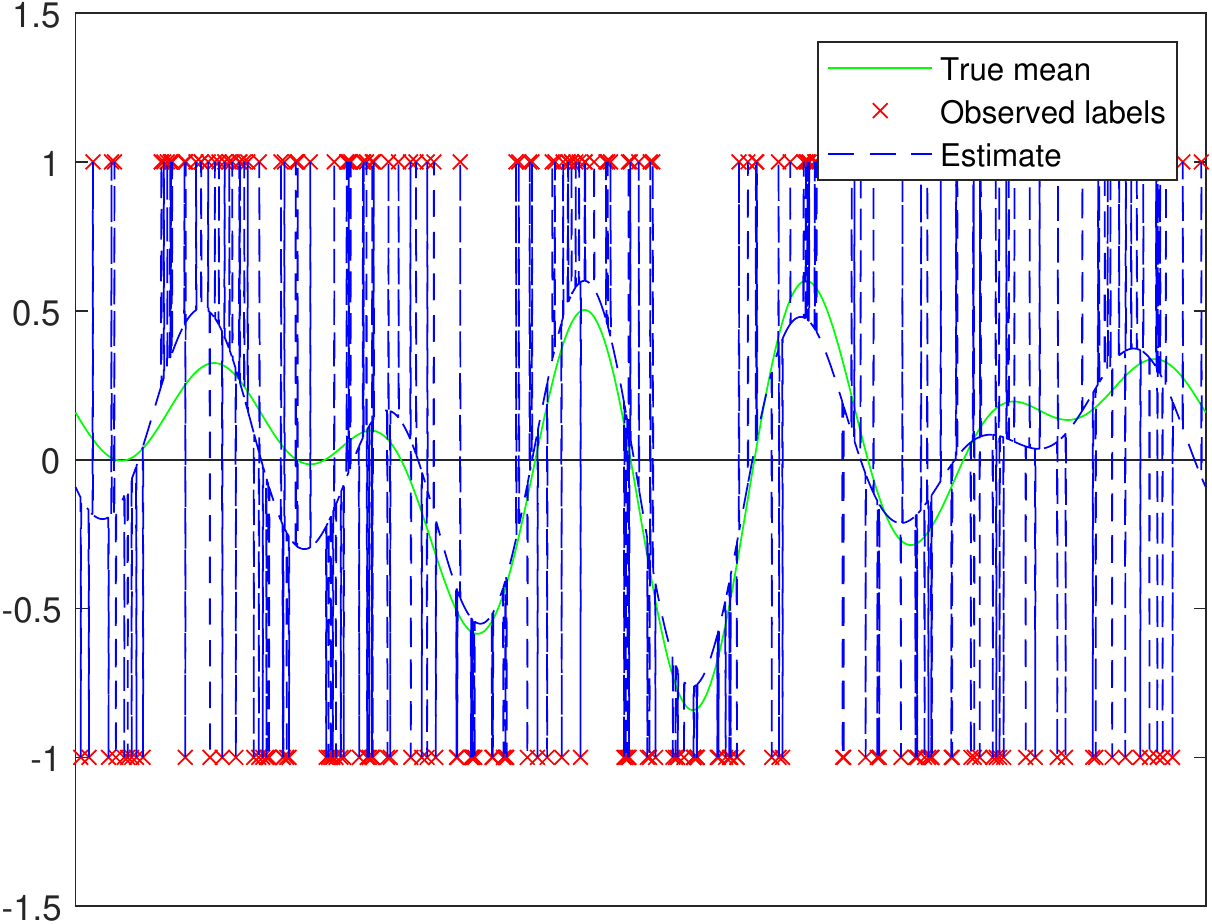}
    \caption{\small Binary labels---good regression performance}
\end{subfigure}\hfill
\begin{subfigure}[t]{0.32\textwidth}
    \centering
    \includegraphics[width=0.99\textwidth]{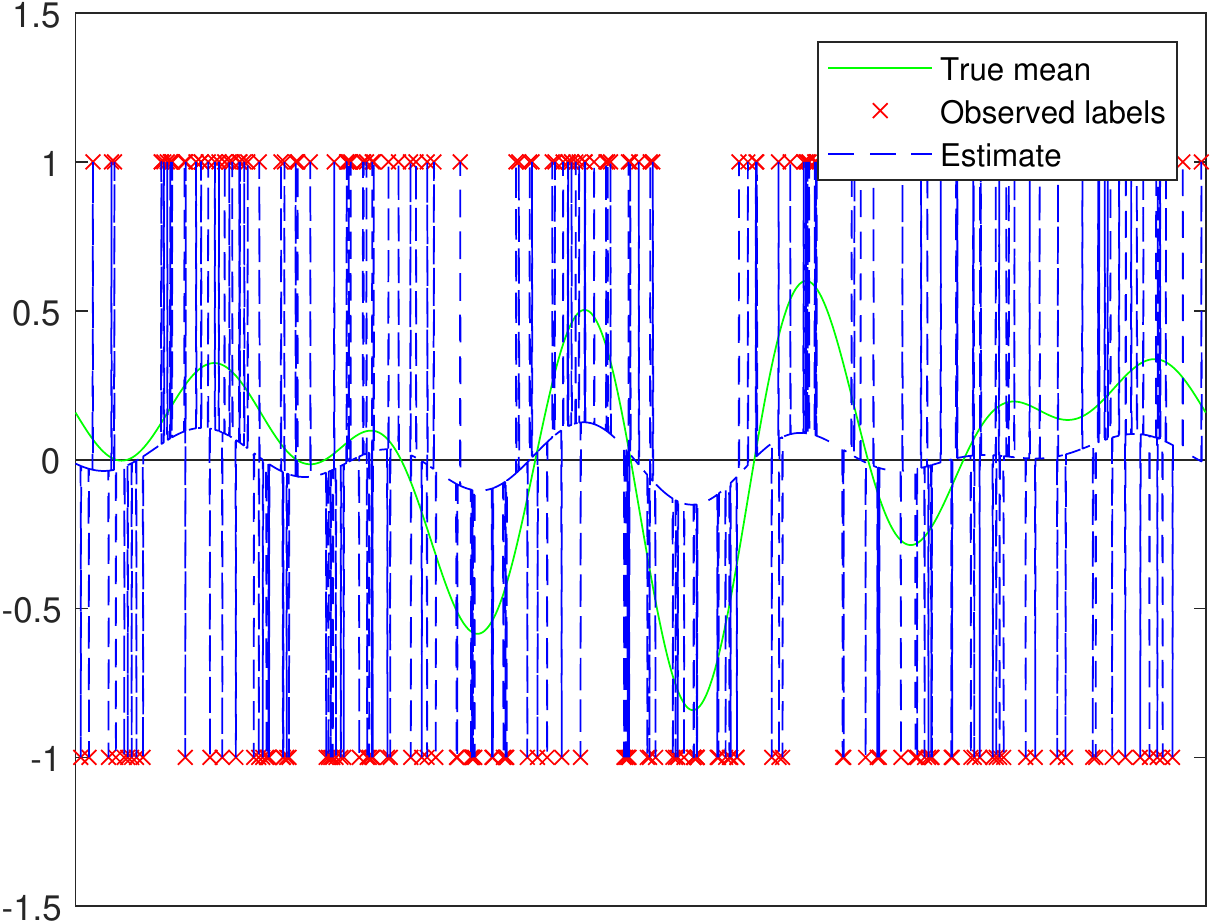}
    \caption{\small Binary labels---poor regression but good classification performance}
\end{subfigure}
\caption{\small Interpolation in various regimes. This uses the bi-level Fourier series framework of \Cref{sec:numerical}.} \vspace{-5mm}
\label{fig:interp_regimes}
\end{figure*}

% \vspace{-6mm}
\section{INTRODUCTION}
% \vspace{-2mm}
% Para 1: Brief intro to harmless interpolation and its high-level significance.
% The usual spiel, and a brief recap of recent work in the area.
% \vm{To be emphasized ASAP: our focus is more on consistency of interpolation w.r.t. $n$ and less on the double-descent behavior.}
Overparametrized neural networks tend to perfectly fit, or \emph{interpolate}, noisy training data. 
Somewhat surprisingly, these overparametrized networks also tend to generalize well \citep{Zhang2017}. More recently, this phenomenon of ``harmless interpolation" was also empirically demonstrated in the much simpler model families of kernel machines \citep{Belkin2018} and overparametrized linear models \citep{Belkin2019}. These observations have motivated a large body of research that aims to develop a mathematical understanding of the generalization properties of interpolating solutions and the impact of fitting noise (see \Cref{sec:relwork,app:litreview} for more related work).

While these theoretical results %in linear settings 
represent significant progress, they come with some caveats.
Most notably, harmless interpolation has only been shown under (a) strong assumptions on the feature distribution or (b) high dimension of the input data.
For example, the strongest guarantees on harmless interpolation assume that the features consist of independent random variables (or are a linear transformation of such a vector).
Similarly, the positive results on consistency of kernel interpolation require the dimension of the input data to grow with the size of the training set.

To see why these assumptions may not be realistic, consider the problem of simple linear regression using a Fourier series model $f(x) = \sum_{\ell} a_\ell e^{i 2\pi \ell x}$ for a function $f$ on the interval $[0, 1]$,
where $\ell$ may range over all integers or, as we will later assume, a subset $\{-d, \dots, d\}$.
Here the input data dimension is $1$, and the features are given by $v_{\ell}(x) = e^{\mathbf{j} 2\pi \ell x}$. 
If $x$ is uniformly distributed,
the features $\{v_{\ell}(x)\}_\ell$ (all evaluated at the same random $x$), though \emph{uncorrelated}, are not \emph{independent}.
In this (and many other) examples, the input data can be low-dimensional and the features may not be independent.
Whether harmless interpolation is possible with high-dimensional feature maps on such \emph{constant}-dimensional data remains an open question.
As a first effort, \cite{Muthukumar2020a} show that harmless interpolation can occur with structured feature maps with uniformly spaced data,
but whether this can be shown for the more realistic case of randomly-sampled data has remained open.

A second question is how the interpolation phenomenon applies to the \emph{classification} problem.
For example, \cite{Muthukumar2021,Chatterji2021}
show that the max-margin support vector machine can achieve good performance even when the corresponding regression task does not.
These results require the very strong assumption of independent (sub)Gaussian features.
Whether this asymptotic separation between regression and classification tasks exists in more general kernel settings is not addressed by this literature.

% \vspace{-3mm}
\subsection{Our Contributions}
% \vspace{-1mm}
In this paper, we provide new non-asymptotic risk bounds for both regression and classification tasks with the standard Hilbert-norm regularizer under minimal regularity assumptions. %on the eigenfunctions and the eigenvalue structure other than mild regularity conditions.
% \vm{Feel free to modify ``regularity condition" language above to something that reads a little better}
Our results apply for an arbitrarily small amount of regularization (including the interpolating regime) and are summarized below.

% \vspace{-1mm}
\textbf{Harmless interpolation in kernel regression.}
For the regression task, we obtain new non-asymptotic risk bounds on the mean-squared-error of the Hilbert-norm regularized estimator, which includes the cases of kernel ridge regression and minimum-Hilbert-norm interpolation.
In \Cref{sec:results_determ}, we give error bounds for fixed sample locations.
In \Cref{sec:operator_conc}, we give a variety of concentration results from random sampling that, when combined with our fixed-sample theorems, yield high-probability guarantees of harmless interpolation.
Our results imply harmless interpolation in significantly more general settings than previous works
(see~\Cref{sec:relwork} for a comparison to prior work).
Our results recover existing independent-feature results (e.g., \cite{Bartlett2020}) but also apply to other examples such as bounded orthonormal systems (BOSs).
BOSs include many popular feature ensembles such as sinusoids and Chebyshev polynomials.
\Cref{fig:interp_regimes}a shows an example of a function estimate that yields strong regression performance for the case of sinusoidal Fourier basis features.

\textbf{Asymptotic separation between kernel classification and regression.}
We next analyze the classification error of the minimum-Hilbert-norm interpolator of binary labels.
Although good regression performance implies good classification performance (see \Cref{fig:interp_regimes}b), the reverse is not true.
In \Cref{sec:class}, we derive a simple bound on classification error that can be much tighter than the bound on regression error,
and we present another fixed-sample error bound useful for bounding the regression risk.
Then, for the case of bounded orthonormal system features,
we demonstrate an asymptotic separation between the regression and classification tasks.
\Cref{fig:interp_regimes}c illustrates how the minimum-norm label interpolator can have poor regression performance but good classification performance.

\subsection{Related Work}\label{sec:relwork}
\textbf{Harmless interpolation.}
Recent work has shown that the ``harmless interpolation" phenomenon becomes more pronounced with increased (effective) overparameterization when the minimum-Hilbert-norm interpolator is used in kernel regression or the minimum-norm interpolator is used in linear regression in a variety of models.
All of the current known harmless interpolation results assume at least one of the following (see \Cref{app:litreview} for a more complete citation list): (a) independence of features, (b) sub-Gaussianity in the feature vector, (c) high data dimension, or (d) explicit structure in the kernel operator/feature map.
For specific kernels like the Laplace kernel, statistically consistent interpolation may actually \emph{require} growing data dimension with the number of training examples~\citep{Rakhlin2019}, as the data dimension fundamentally alters the eigenvalues of the Laplace kernel integral operator.
In contrast, our results do not explicitly posit any of these assumptions.
Our sufficient conditions for harmless interpolation are expressed purely in terms of the eigenvalues of the kernel integral operator, and do not utilize special structure either on the eigenfunctions or the integral operator itself.

\textbf{Classification versus regression.} General techniques from statistical learning theory (e.g., \cite{Vapnik2000,Bartlett2002}) do not differentiate between classification and regression tasks.
However, the idea that classification is easier than regression is well-known: the main idea is we do not need near-zero bias, but rather how much signal is recovered only needs to be large relative to the variance. 
% Thus a larger bias can be tolerated in classification tasks than in regression tasks.
This idea goes back to~\citep{Friedman1997}, and has primarily been used to obtain faster non-asymptotic rates for classification relative to regression in a number of scenarios~\citep{Devroye1996,Koltchinskii2005,Audibert2007}.
A separation in statistical consistency between the two tasks was shown more recently in~\cite{Muthukumar2021}.
Similar sharp analyses for classification error have also been provided for the related high-dimensional linear discriminant analysis setting~\citep{Chatterji2021,Wang2021,Cao2021}.
These results all make restrictive assumptions of Gaussianity, independent sub-Gaussian features, or Gaussian mixture models; the most fine-grained analyses \citep{Muthukumar2021,Wang2021} require Gaussian design.
With our more general analysis, we show that the previous restrictive assumptions can be avoided and demonstrate that the separation between classification and regression consistency is a general phenomenon.
Although our error expressions are less sharp nonasymptotically than those that assume Gaussian features,
the consistency implications are nearly identical.

\textbf{General kernel regression.}
Finally, our work continues a substantial literature on general linear and RKHS regression.
Space limitations prevent a comprehensive review,
but we note that our analysis techniques most closely resemble the approach of \cite{Zhang2005,Hsu2014}, who analyze explicitly regularized ridge regression under random design with minimal assumptions on the data distribution.
Other notable works are \cite{Caponetto2007,Steinwart2009},
which also use techniques based on the kernel integral operator. These works assume a power-law eigenvalue decay to get power-law regression error bounds.
Our results apply to more general kernels with an arbitrary eigenvalue decay and give a more refined bias-variance decomposition of error.
Significantly, none of these works analyze interpolating solutions in the presence of noise.

% \vspace{-2mm}
\section{KERNEL REGRESSION}
% We first establish the basic framework for kernel regression and present our main results for this task.
Our results are presented in terms of reproducing kernel Hilbert space regression (with traditional linear regression as a special case).
We first introduce the analytical framework and then present our main results.
\subsection{Kernel Regression Introduction}
% \vspace{-2mm}
We first review the general theory of regression in reproducing kernel Hilbert spaces.
A more thorough introduction to kernel theory can be found in many standard references, such as \cite{Schoelkopf2002}, \cite{Wendland2004}, and Chapter 12 of \cite{Wainwright2019}.

Let $X$ be a set, and let $\scrH$ be a real reproducing kernel Hilbert space over $X$ with kernel $k \colon X \times X \to \R$.
For $f \in \scrH$ and $x \in X$,
we have $f(x) = \ipH{f}{k_x}$,
where $k_x \coloneqq k(\cdot, x)$.
Note that this implies that $k(x, y) = \ipH{k_x}{k_y}$.

Suppose $f^* \in \scrH$, and we observe $y_i = f^*(x_i) + \xi_i$, $i = 1,\dots, n$,
where $x_1, \dots, x_n \in X$ are sample points, and the $\xi_i$'s represent noise or other measurement error.
We use the kernel ridge regression estimate
\[
\fhat = \argmin_{f \in \scrH}~\sum_{i=1}^n (y_i - f(x_i))^2 + \alpha \normH{f}^2,
\]
where $\alpha \geq 0$ is a regularization term.
% If $\alpha = 0$, we can instead write
When $\alpha \to 0$, we get the minimum-Hilbert-norm interpolator,
\[
	\fhat = \argmin_{f \in \scrH}~\normH{f}~\text{s.t.}~f(x_i) = y_i~\forall i = 1,\ldots,n.
\]
By the standard kernel regression formula, we have $\fhat(x) = \sum_{i = 1}^n \zhat_i k(x, x_i)$ where the vector $\zhat = \parens*{ \alpha I_n + K}^{-1} y$,
and $K$ is the kernel Gram matrix with $K_{ij} = \ipH{k_{x_i}}{k_{x_j}} = k(x_i, x_j)$.
We denote by $\scrA\colon \scrH \to \R^n$ the \emph{sampling operator}, which is defined by $(\scrA(f))_i = f(x_i) = \ipH{f}{k_{x_i}}$.
The adjoint of the sampling operator is given by $\scrA^*(z) = \sum_{i=1}^n z_i k_{x_i}$ for all $z \in \R^n$.
Then the Gram matrix is $K = \scrA \scrA^*$, and we can write the kernel regression estimate in terms of the standard ridge regression formulas:
\[
	\fhat = \scrA^* (\alpha I_n + \scrA \scrA^*)^{-1} y = (\alpha \scrI_{\scrH} + \scrA^* \scrA)^{-1} \scrA^* y.
\]
Note that, in general, the second expression is only well-defined if $\alpha > 0$ (since $\scrA$ is rank-deficient if $\scrH$ is infinite-dimensional).

We analyze two terms in this estimator.
The first is the estimator that would be obtained in the absence of noise, which is given by
\begin{align*}
	\fhat_0 &\coloneqq \scrA^* (\alpha I_n + \scrA \scrA^*)^{-1} \scrA f^* = (\alpha \scrI_{\scrH} + \scrA^* \scrA)^{-1} \scrA^* \scrA f^*.
\end{align*}
The second is the contribution to the estimate due to noise,
which we denote by the function $\epsilon(x)$. We have
\[
\epsilon = \scrA^* (\alpha I_n + \scrA \scrA^*)^{-1} \xi,
\]
where $\xi = (\xi_1, \dots, \xi_n)$.
This leads to a standard decomposition in the error of the estimator $\fhat$ in terms of its bias and variance.

To characterize the test error, we need a sampling model.
Let $\mu$ be a probability measure on $X$.
We then define the kernel integral operator $\scrT$ as
\[
	(\scrT(f))(x) = \int_X k(x, y) f(y)~d\mu(y)
\]
with respect to the measure $\mu$.
Under mild regularity/continuity conditions (see, e.g., \cite{Steinwart2012} for a thorough analysis), we have the eigenvalue decomposition
\[
	\scrT(f) = \sum_{\ell=1}^\infty \lambda_\ell \ipsub{v_\ell}{f}{L_2} v_\ell,
\]
where $\{ v_\ell \}_{\ell=1}^\infty$ is an orthonormal basis for $L_2(X, \mu)$,
and $\lambda_1 \geq \lambda_2 \geq \lambda_3 \geq \cdots$ are the eigenvalues of $\scrT$ arranged in decreasing order.
Furthermore, we have
\[
	k(x, y) = \sum_{\ell=1}^\infty \lambda_\ell v_\ell(x) v_\ell(y).
\]
We can handle the finite-dimensional case by setting $\lambda_\ell = 0$ for $\ell > d$, where $d = \dim(\scrH)$
(furthermore, the standard linear regression case can be recovered with $X = \R^d$ and $k(x, y) = \ip{x}{y}_{\ell_2}$).
Note that in order to interpolate an arbitrary set of samples, we need the dimension $d$ to be at least the number of samples $n$ (otherwise, the linear system is overdetermined).

We will also use the following well-known fact throughout our analysis:
for any $f, g \in L_2$, we have
\[
	\ipsub{f}{g}{L_2} = \ipH{\scrT^{1/2} f}{\scrT^{1/2} g}.
\]
Hence, $\scrT^{1/2}$ is an isometry from $L_2$ to $\scrH$.
Note that this implies that for every $f \in \scrH$,
\[
	\normH{f}^2 = \sum_{\ell=1}^\infty \frac{\ipsub{f}{v_\ell}{L_2}^2}{\lambda_\ell}.
\]
Intuitively, we expect that if $f$ has small/bounded $\scrH$-norm, most of its energy is captured by components corresponding to relatively large eigenvalues.
Therefore, it is feasible to recover an accurate (in $L_2$) estimate of $f$,
even though $f$ lies in an infinite-dimensional space.

We will assume $x_1, \dots, x_n \simiid \mu$, i.e., the training examples are drawn from the same measure as the test example $x \sim \mu$.
Since we are evaluating a regression task, we wish to bound the squared (excess) prediction loss $\E (\fhat(x) - f^*(x))^2 = \norm{\fhat - f^*}_{L_2}^2$.
We will provide non-asymptotic upper bounds on $\norm{\fhat - f^*}_{L_2}^2$ as a function of the number of training examples $n$.
We will also focus on understanding scenarios for which we obtain statistical consistency, i.e., $\norm{\fhat - f^*}_{L_2}^2 \to 0$ as $n \to \infty$.

%\subsection{Main results (deterministic sample locations)}
% \vspace{-3mm}
\subsection{Main Results for Deterministic Sample Locations}
\label{sec:results_determ}
% \vspace{-1mm}
To state our main results, we introduce some additional notation.
Here and for the rest of this section, $p$ will be a fixed integer that we can tune in our analysis.
We divide the function space $L_2(X, \mu)$ into two parts:
$G = \spn\{v_1, \dots, v_p\}$ denotes the space spanned by the first $p$ eigenfunctions of $\scrT$, and $G^\perp$ denotes its orthogonal complement (in both $L_2$ and $\scrH$).
Accordingly, we split our sampling operator into two parts: $\scrA_G = \restr{\scrA}{G}$ and $\scrR = \restr{\scrA}{G^\perp}$.
Intuitively, if $p$ is chosen such that $\lambda_{p+1}, \lambda_{p+2}, \dots$ are relatively small,
we expect $G$ to contain most of the energy in any given function $f \in \scrH$.
A key fact is that the Gram matrix can be decomposed as $\scrA \scrA^* = \scrA_G \scrA_G^* + \scrR \scrR^*$.
The dimension $p$ is similar to (but more flexible than) the regularization-dependent effective dimension in \cite{Zhang2005,Hsu2014}.

Since $p = \dim(G)$ is finite,
we can recover a function in $G$ from a finite number of samples.
We state this quantitatively by analyzing the restricted sampling operator $\scrA_G$.
To state concentration results on $G$ in terms of the $L_2$ norm, we denote $\scrC = \scrA_G$,
and we let $\scrC^* = \scrT_G^{-1} \scrA_G^*$ be its adjoint with respect to the $L_2$ inner product.
Note that $\frac{1}{n} \E \scrA_G^* \scrA_G = \scrT_G$,
where $\scrT_G = \restr{\scrT}{G}$.
Therefore,
\[
	\frac{1}{n} \E \scrC^* \scrC = \scrI_G,
\]
where $\scrI_G$ is the identity operator on $G$.
Provided that $n \gg p$, we expect $\frac{1}{n} \scrC^* \scrC \approx \scrI_G$.
We will analyze how closely this holds later;
we first state \emph{deterministic} results that depend on the error in this approximation.

The second key approximation regards the ``remainder Gram matrix'' $\scrR \scrR^*$.
Previous interpolation literature has assumed that this matrix is approximately a multiple of the identity $I_n$ (or is in some sense ``well-conditioned'').
We will again analyze how accurately this holds later, but for now, we will state our main results assuming that $\alpha I_n + \scrR \scrR^*$ is upper and lower bounded by multiples of the identity.
There is no requirement that $\alpha \geq 0$;
in principle, our framework applies to negative regularization \citep{Tsigler2020}, but we do not explore this aspect in detail.

Finally, we will assume, for simplicity and brevity, that $f^* \in G$ exactly.
If this did not hold, there would be another term in the ``bias'' error bound whose size is directly proportional to the size of $\scrP_{G^\perp}(f^*)$, which in turn is negligible provided that $f^* \in \scrH$ (i.e., $f^*$ has bounded $\scrH$-norm).
Note that kernel methods run into fundamental approximation-theoretic limitations in the absence of a bounded-$\scrH$-norm assumption~\citep{Ghorbani2021,Donhauser2021,Belkin2018b}.

\begin{theorem}[Bias]
	\label{thm:reg_bias_deterministic}
	Suppose that
	\begin{enumerate}
	    \item $\alpha_L I_n \preceq \alpha I_n + \scrR \scrR^* \preceq \alpha_U I_n$ for some numbers $\alpha_U \geq \alpha_L > 0$, and
	    \item $\frac{\alpha_U - \alpha_L}{\alpha_U + \alpha_L} + \frac{2}{n} \norm*{ \scrC^*\scrC - n \scrI_G}_{L_2} \leq c$ for some $c < 1$.
	\end{enumerate}
	Let $\alphabr = \frac{2 \alpha_U \alpha_L}{\alpha_U + \alpha_L}$ be the harmonic mean of $\alpha_U$ and $\alpha_L$.
	Then, for any $f^* \in G$, we have
	\begin{align*}
		\norm*{ \fhat_0 - f^* }_{L_2}
		&\lesssim \min \braces*{\sqrt{\lambda_1},  \frac{1}{1-c}  \frac{\alphabr}{n \sqrt{\lambda_p}}, \frac{1}{1-c} \sqrt{\frac{\alphabr}{n} } } \\
		&\qquad\times \parens*{ 1 + \sqrt{\frac{n \lambda_{p+1}}{\alphabr}} } \normH{f^*}.
	\end{align*}
\end{theorem}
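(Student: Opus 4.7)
The plan is to decompose $\fhat_0 - f^*$ along the splitting $\scrH = G \oplus G^\perp$, bound each piece in $L_2$, and combine. Setting $M \coloneqq \alpha I_n + \scrR\scrR^*$ and noting $\scrA f^* = \scrA_G f^*$ (since $f^* \in G$), the push-through identity $(M + \scrA_G\scrA_G^*)^{-1}\scrA_G = M^{-1}\scrA_G(\scrI_G + H)^{-1}$ with $H \coloneqq \scrA_G^* M^{-1}\scrA_G$ rewrites $\fhat_0 = \scrA^* M^{-1}\scrA_G g$, where $g \coloneqq (\scrI_G + H)^{-1} f^*$. Splitting $\scrA^* = \scrA_G^* + \scrR^*$ (which have ranges $G$ and $G^\perp$ respectively) and using $H(\scrI_G + H)^{-1} = \scrI_G - (\scrI_G + H)^{-1}$, I would get $(\fhat_0 - f^*)|_G = -g$ and $(\fhat_0 - f^*)|_{G^\perp} = \scrR^* M^{-1}\scrA_G g$. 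Since $G$ and $G^\perp$ are $L_2$-orthogonal, the task reduces to bounding these two pieces separately.

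For the $G^\perp$ piece $u \coloneqq \scrR^* M^{-1}\scrA_G g$, the inclusion $u \in \scrH \cap G^\perp$ together with the eigendecomposition gives $\|u\|_{L_2} \leq \sqrt{\lambda_{p+1}}\,\normH{u}$. Using $\scrR\scrR^* \preceq M$ (from Assumption 1) yields
\[
\normH{u}^2 \leq \ipH{g}{Hg} = \langle \scrC g,\, M^{-1}\scrC g\rangle_{\R^n} \leq \alpha_L^{-1}\,\langle g,\,\scrC^*\scrC g\rangle_{L_2}.
\]
The bound $\|\scrC^*\scrC\|_{L_2 \to L_2} \leq n(1+c/2)$ (from Assumption 2) combined with the identity $\alpha_L^{-1} = (1+b)\alphabr^{-1}$, where $b \coloneqq (\alpha_U - \alpha_L)/(\alpha_U + \alpha_L) \leq c$, gives $\|u\|_{L_2} \lesssim \sqrt{n\lambda_{p+1}/\alphabr}\,\|g\|_{L_2}$, which produces the multiplicative factor $(1 + \sqrt{n\lambda_{p+1}/\alphabr})$ in the theorem.

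The three terms in the min correspond to three bounds on $\|g\|_{L_2}$. The trivial bound uses $\scrH$-self-adjointness and positivity of $H$: $\normH{g} \leq \normH{f^*}$, whence $\|g\|_{L_2} \leq \sqrt{\lambda_1}\,\normH{f^*}$, with no perturbation penalty. For the two refined bounds I would compare $g$ to the idealized shrinkage $A^{-1} f^*$, where $A \coloneqq \scrI_G + (n/\alphabr)\scrT_G$ is diagonal in the eigenbasis with entries $1 + n\lambda_k/\alphabr$. Combining the pointwise estimates $\alphabr/(\alphabr+n\lambda_k) \leq \min\{\tfrac{1}{2}\sqrt{\alphabr/(n\lambda_k)},\,\alphabr/(n\lambda_k)\}$ with the expansion $\normH{f^*}^2 = \sum_{k \leq p} (f^*_k)^2$ (and $1/\lambda_k \leq 1/\lambda_p$ for $k \leq p$) yields $\|A^{-1}f^*\|_{L_2} \leq \min\{\tfrac{1}{2}\sqrt{\alphabr/n},\,\alphabr/(n\sqrt{\lambda_p})\}\normH{f^*}$.

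The main obstacle is bridging from $A^{-1}$ to $(\scrI_G + H)^{-1}$. Writing $H = (n/\alphabr)\scrT_G P$ with $P \coloneqq (\alphabr/n)\scrC^* M^{-1}\scrC$, a short algebraic computation combining the two perturbation sources of Assumption 2 gives $\|P - \scrI_G\|_{L_2} \leq a + b(1+a) \leq 2a + b \leq c$, where $a \coloneqq \|\scrC^*\scrC - n\scrI_G\|_{L_2}/n$. Then $\scrI_G + H = A + B$ with $B \coloneqq (n/\alphabr)\scrT_G(P - \scrI_G)$, and the key inequality $\|A^{-1} B\|_{L_2} \leq \|A^{-1}(n/\alphabr)\scrT_G\|_{L_2}\,\|P - \scrI_G\|_{L_2} \leq c$ (the first factor is $\leq 1$ because $A^{-1}(n/\alphabr)\scrT_G$ is diagonal with entries $n\lambda_k/(\alphabr+n\lambda_k) \in [0,1)$) together with a Neumann expansion gives $\|(\scrI_G + H)^{-1} f^*\|_{L_2} \leq \tfrac{1}{1-c}\|A^{-1} f^*\|_{L_2}$, producing the $\tfrac{1}{1-c}$ prefactor. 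The trickiest conceptual point throughout is that $H$ is $\scrH$-self-adjoint but not $L_2$-self-adjoint, so each operator inequality must be applied in the correct inner product: the Neumann step lives naturally in $L_2$-operator norm, while the $\scrH$-positivity of $H$ is what makes the $\sqrt{\lambda_1}$ bound work.
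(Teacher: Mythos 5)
Your proposal is correct and follows essentially the same route as the paper: after the push-through identity, you decompose along $G \oplus G^\perp$, control the $G^\perp$ piece via $\|u\|_{L_2}\leq\sqrt{\lambda_{p+1}}\normH{u}$, and handle the $G$ piece by comparing $(\scrI_G+H)^{-1}$ to the idealized $A^{-1}=\scrBbr$ through a Neumann series controlled by the same quantity $\frac{\alpha_U-\alpha_L}{\alpha_U+\alpha_L}+\frac{2}{n}\norm{\scrC^*\scrC-n\scrI_G}_{L_2}\leq c$. The only cosmetic differences are that you work with the vector $g=\scrB f^*$ rather than the operator $\scrB$, and you bound the $G^\perp$ piece by a quadratic-form argument ($\scrR\scrR^*\preceq M$, which tacitly needs $\alpha\geq 0$) rather than the paper's singular-value estimate on $\scrR^*(\alpha+\scrR\scrR^*)^{-1}$; both give the same $\sqrt{n\lambda_{p+1}/\alphabr}$ factor.
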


\begin{theorem}[Variance]
	\label{thm:reg_var_deterministic}
	Suppose the conditions of \Cref{thm:reg_bias_deterministic} hold,
	and let $\alphatl = \frac{\alpha_U + \alpha_L}{2}$.
	Furthermore, suppose the $\xi_i$'s are zero-mean and independent with variance bounded by $\sigma^2$.
    Then
	\begin{align}\label{eq:varianceerrorterms}
		\E_{\xi} \norm{\epsilon}_{L_2}^2
		\lesssim \sigma^2 \parens*{\frac{\alpha_U}{\alpha_L} + 1}^2 \parens*{ \frac{p}{n} + \frac{\tr_{L_2}(\scrR^* \scrR)}{\alphatl^2} }.
	\end{align}
\end{theorem}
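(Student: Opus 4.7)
The plan is to compute $\E_\xi\norm{\epsilon}_{L_2}^2$ as a deterministic trace, decompose it along $G$ and $G^\perp$, and bound each summand using one of the two hypotheses.

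Starting from $\epsilon = \scrA^*M^{-1}\xi$, the identity $\norm{f}_{L_2}^2 = \ipH{f}{\scrT f}$ for $f\in\scrH$, and $\E[\xi\xi^\top]\preceq\sigma^2 I_n$, I would write $\E_\xi\norm{\epsilon}_{L_2}^2 \le \sigma^2\tr(M^{-2}\scrA\scrT\scrA^*)$. Since $\scrT$ preserves both $G$ and $G^\perp$ and $\scrA=\scrC\scrP_G+\scrR\scrP_{G^\perp}$, the operator $\scrA\scrT\scrA^*$ splits cleanly as $\scrC\scrT_G\scrC^* + \scrR\scrT_{G^\perp}\scrR^*$, producing the two contributions in~\eqref{eq:varianceerrorterms}. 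For the $G^\perp$ (``tail'') term, I would use $M\succeq \alpha I_n + \scrR\scrR^* \succeq \alpha_L I_n$, so $M^{-2}\preceq\alpha_L^{-2}I_n$, giving $\tr(M^{-2}\scrR\scrT_{G^\perp}\scrR^*)\leq\alpha_L^{-2}\tr(\scrR\scrT_{G^\perp}\scrR^*)\lesssim\alpha_L^{-2}\tr_{L_2}(\scrR^*\scrR)$ by cyclic trace manipulation, and combine with $(\alpha_U/\alpha_L+1)^2/\alphatl^2\lesssim 1/\alpha_L^2$ to recover the second summand.

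The $G$-term is the main step. Representing $\scrC$ in the $L_2$-ONB of $G$ by the $n\times p$ matrix $V$ with $V_{i\ell}=v_\ell(x_i)$, Assumption~2 becomes $\norm{V^\top V - nI_p}\leq cn$. In the idealized regime $V^\top V=nI_p$ and $Q_L=\alphatl I_n$, the operators $\scrC\scrC^*$ and $Q_L$ commute, and simultaneous diagonalization yields
\begin{equation*}
  \tr\bigl(M^{-2}\scrC\scrT_G\scrC^*\bigr) = n\sum_{\ell\leq p}\frac{\lambda_\ell^2}{(n\lambda_\ell+\alphatl)^2}\leq\frac{p}{n},
\end{equation*}
using the elementary inequality $\lambda/(n\lambda+\alphatl)\leq 1/n$. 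To propagate this estimate to the non-idealized case, I would use a Woodbury push-through identity for $V^\top M^{-1}V$ combined with $V^\top M^{-2}V\preceq\norm{M^{-1}}V^\top M^{-1}V$, reducing everything to estimating the deviation of the $p\times p$ matrix $V^\top Q_L^{-1}V$ from $(n/\alphatl)I_p$.

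The hard part will be this last perturbation step: the two non-commuting perturbations --- in $V^\top V$ via Assumption~2 and in $Q_L$ via Assumption~1 --- must be handled simultaneously without inflating the effective dimension past $p$. Each contributes a factor at most $1+\alpha_U/\alpha_L$, and they combine multiplicatively to yield the $(\alpha_U/\alpha_L+1)^2$ prefactor. The cleanest route is likely a Loewner-ordering chain that first replaces $Q_L$ by $\alpha_L I_n$ and $\alpha_U I_n$ in appropriate upper and lower bounds, then handles the residual finite-dimensional perturbation in $V^\top V$ using the concentration assumption.
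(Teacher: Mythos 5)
Your starting point---writing $\E_\xi\norm{\epsilon}_{L_2}^2 \le \sigma^2\tr\bigl(M^{-2}\scrA\scrT\scrA^*\bigr)$ with $M = \alpha I_n + \scrA\scrA^*$ and splitting $\scrA\scrT\scrA^* = \scrA_G\scrT_G\scrA_G^* + \scrR\scrT_{G^\perp}\scrR^*$---is equivalent to the paper's Hilbert--Schmidt-norm formulation, and your handling of the $G^\perp$ term is correct (it is exactly the paper's). The gap is in the $G$ term, and the specific device you propose would not close it. The inequality $V^\top M^{-2}V \preceq \norm{M^{-1}}\,V^\top M^{-1}V$ with the only available global bound $\norm{M^{-1}}\le 1/\alpha_L$ is far too lossy on the range of $V$: even in the fully idealized regime ($\scrC^*\scrC = n\scrI_G$, $\alpha I_n + \scrR\scrR^* = \alphatl I_n$) it yields $\tr(M^{-2}\scrA_G\scrT_G\scrA_G^*) \le \alpha_L^{-1}\sum_{\ell\le p}\lambda_\ell$, which exceeds the target $p/n$ by a factor on the order of $n\lambda_p/\alpha_L$---and $n\lambda_p \gg \alphabr$ is precisely the regime of interest for small bias. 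The operator $M^{-1}$ has eigenvalues as small as $\approx 1/(n\lambda_1)$ on the $V$-directions, and it is exactly that decay you need; replacing $M^{-1}$ by its worst-case scalar bound throws it away.

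What you call ``the hard part''---controlling the two non-commuting perturbations (Assumption~1 on $\alpha I_n + \scrR\scrR^*$ and Assumption~2 on $\scrC^*\scrC$) simultaneously---is exactly what the paper's proof is structured to avoid. It factors $\scrA^*M^{-1} = \scrA^*(\alphatl I_n + \scrA_G\scrA_G^*)^{-1}\cdot(\alphatl I_n + \scrA_G\scrA_G^*)M^{-1}$ and shows, by a Neumann-series argument that uses Assumption~1 alone (\Cref{lem:Gram_error}), that the second factor has $\ell_2$ operator norm at most $\tfrac12(\alpha_U/\alpha_L + 1)$. Because $\norm{AB}_{HS}\le\norm{A}_{HS}\norm{B}$, this pulls the $(\alpha_U/\alpha_L+1)^2$ prefactor out in one step and replaces $M^{-1}$ by the ``clean'' inverse $(\alphatl I_n + \scrA_G\scrA_G^*)^{-1}$, which has the right decay on the $G$-directions; only then is the $G/G^\perp$ split taken, and Assumption~2 enters only through the singular values of $\scrC$ being close to $\sqrt{n}$. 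The two perturbations are thus handled one at a time, in sequence, rather than jointly. If you want to complete your route, the missing ingredient is a bound of the form $V^\top M^{-2}V \preceq (\text{const})\, V^\top(\alphatl + \scrA_G\scrA_G^*)^{-2}V$ rather than one involving the global $\norm{M^{-1}}$; once you look for such a bound, the paper's ratio-operator lemma is essentially what you are forced to prove. (A small notational aside: with $\scrC^*$ the $L_2$-adjoint as defined here, $\scrA_G\scrT_G\scrA_G^* = \scrC\scrT_G^2\scrC^*$, not $\scrC\scrT_G\scrC^*$; your idealized trace computation is consistent with the former.)
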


\Cref{sec:proof_sketch} contains simplified proof sketches of \Cref{thm:reg_bias_deterministic,thm:reg_var_deterministic};
we provide complete proofs in \Cref{app:deterministic_proofs} in the supplementary material.
The reader should note that our proofs consist of relatively simple linear algebra.
Compare this, for example, to \cite{Bartlett2020} or \cite{Muthukumar2021},
where the analysis depends delicately on the independence (or, in the latter case, even Gaussianity) of the features via rather complicated matrix manipulations.

We could also obtain a high-probability (with respect to $\xi$) bound on the variance (if, e.g., the $\xi_i$'s are sub-Gaussian), but we omit this to preserve the clarity and simplicity of the result.
We outline how one could do this in \Cref{app:variance_proof}.

% \todo{Should we give the high-probability version here (and/or in appendix)? (Needs more concentration results on $\scrR$)}

% \vspace{-3mm}
\subsection{Operator Concentration Results}
\label{sec:operator_conc}
% \vspace{-1mm}

We now state operator concentration results on three important quantities:
(a) the deviation of the residual Gram matrix $\scrR \scrR^*$ from a multiple of the identity, (b) the quantity $\tr_{L_2} (\scrR^* \scrR)$ which appears in the variance bound, and (c) the deviation of $\frac{1}{n} \scrC^* \scrC$ from $\scrI_G$.
All proofs are contained in \Cref{app:conc_proofs} in the supplementary material.
We begin with our most general results that apply under minimal assumptions.

% \vspace{-1mm}
\subsubsection{General Residual Concentration}\label{sec:generalresidual}
% \vspace{-1mm}
% The following results are completely general and allow us to bound two important quantities with minimal assumptions:
% (a) the deviation of the residual Gram matrix $\scrR \scrR^*$ from a multiple of the identity
% and (b) the quantity $\tr_{L_2} (\scrR^* \scrR)$ which appears in the variance bound.

Let $k^R(x, y) = \sum_{\ell > p} \lambda_\ell v_\ell(x) v_\ell(y)$ be the
reproducing kernel restricted to $G^\perp$.
\begin{lemma}[Generic residual Gram matrix]
    \label{lem:R_conc_generic}
	\begin{align*}
		\E \norm{\scrR \scrR^* - (\tr \scrT_{G^\perp}) I_n}^2
		&\lesssim n^2 \tr(\scrT_{G^\perp}^2) \\
		&\qquad + \norm{k^R(\cdot, \cdot) - \tr \scrT_{G^\perp} }_\infty^2,
	\end{align*}
	where $\norm{k^R(\cdot, \cdot) - \tr \scrT_{G^\perp} }_\infty = \sup_x \{\abs{k^R(x, x) - \tr \scrT_{G^\perp} } \}$.
\end{lemma}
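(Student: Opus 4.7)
The plan is to observe that $\scrR\scrR^*$ is the $n \times n$ matrix with entries $(\scrR\scrR^*)_{ij} = k^R(x_i, x_j)$, so the error matrix $E := \scrR\scrR^* - (\tr \scrT_{G^\perp}) I_n$ splits naturally into a diagonal part $D$ (with $D_{ii} = k^R(x_i, x_i) - \tr \scrT_{G^\perp}$) and a hollow off-diagonal part $M$ (with $M_{ij} = k^R(x_i, x_j)$ for $i \neq j$, and zero on the diagonal). Since $\norm{E}^2 \leq 2\norm{D}^2 + 2\norm{M}^2$, it suffices to control each piece separately.

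For $D$, the diagonal structure makes $\norm{D} = \max_i \abs{k^R(x_i,x_i) - \tr\scrT_{G^\perp}}$, which is deterministically at most $\norm{k^R(\cdot,\cdot) - \tr\scrT_{G^\perp}}_\infty$. This directly accounts for the second term on the right-hand side of the claimed bound.

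For $M$, I would bound the spectral norm crudely by the Frobenius norm: $\norm{M}^2 \leq \norm{M}_F^2 = \sum_{i \neq j} k^R(x_i,x_j)^2$. Taking expectations over the i.i.d.\ sample reduces this to computing $\E[k^R(x_1,x_2)^2]$ for a single independent pair. Expanding $k^R$ in the Mercer series $k^R(x,y) = \sum_{\ell > p} \lambda_\ell v_\ell(x) v_\ell(y)$, squaring, using independence of $x_1$ and $x_2$, and invoking the $L_2$-orthonormality $\ipsub{v_\ell}{v_m}{L_2} = \delta_{\ell m}$ is the standard trick that collapses the double sum to $\sum_{\ell > p} \lambda_\ell^2 = \tr(\scrT_{G^\perp}^2)$. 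Multiplying by the $n(n-1) \leq n^2$ off-diagonal entries yields the first term.

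The whole argument is essentially one second-moment calculation plus a triangle inequality, so I do not foresee a genuine technical obstacle. The one subtle point worth flagging is that the centering in the lemma is the deterministic scalar matrix $(\tr\scrT_{G^\perp}) I_n$, not $\E[\scrR\scrR^*]$: if $\E[v_\ell(x)] \neq 0$ for some $\ell > p$ (which is permitted since $L_2$-orthonormality does not imply mean-zero), then the off-diagonal entries of $M$ have nonzero mean. This is harmless because bounding $\norm{M}$ via $\norm{M}_F$ only requires the raw second moment $\E[k^R(x_1,x_2)^2]$ rather than a variance. A sharper matrix-Bernstein estimate is possible but unnecessary, since the Frobenius bound already matches the claimed $n^2 \tr(\scrT_{G^\perp}^2)$ scaling.
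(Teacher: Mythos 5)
Your proof is correct and takes essentially the same approach as the paper's: both decompose $\scrR\scrR^* - (\tr\scrT_{G^\perp})I_n$ into a diagonal part (bounded deterministically by the sup norm) and an off-diagonal part (spectral norm bounded by Frobenius norm, whose expectation collapses to $n(n-1)\tr(\scrT_{G^\perp}^2)$ via the Mercer expansion and $L_2$-orthonormality). Your side remark about the centering being a scalar rather than $\E[\scrR\scrR^*]$ is a correct and worthwhile observation, and indeed immaterial since the argument only needs the raw second moment.
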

Note for this result to give $\alpha_L I_n \preceq \scrR \scrR^* \preceq \alpha_U I_n$
where $\alpha_U / \alpha_L$ is bounded,
we need $\tr \scrT_{G^\perp} \gtrsim n\sqrt{\tr(\scrT_{G^\perp}^2)}$.
Even when $\{\lambda_\ell\}_{\ell > p}$ are all equal (see \Cref{sec:bilevel}),
we need $\dim{\scrH} = d \gtrsim n^2$. While this may seem restrictive, it is not possible to do better without additional assumptions on the features. In \Cref{sec:FourierExample}, we show that in the case of Fourier features, $\lambda_{\text{max}}(\scrR\scrR^*)/\lambda_{\text{min}}(\scrR\scrR^*) \gtrsim \tfrac{n^4}{\tau^2d^2}$ with probability at least $1-e^{-\tau}$, and thus $d \gtrsim n^2$ is necessary.
This can be significantly relaxed when the features are independent, as shown in \Cref{sec:ind_feat}.

To bound the variance, we will use the following expectation throughout the rest of this paper:
\begin{lemma}[Generic trace bound on $\scrR^* \scrR$]
	\label{lem:R_trace_exp}
	\[
		\E \tr_{L_2}(\scrR^* \scrR) = n \tr(\scrT_{G^\perp}^2) = n \sum_{\ell > p} \lambda_\ell^2.
	\]
\end{lemma}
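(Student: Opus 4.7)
The plan is a direct Mercer computation: everything reduces to Parseval plus linearity of expectation, and no concentration machinery is needed because the trace is linear in $\scrR^* \scrR$.

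First I would identify the action of $\scrR^*$ on the standard basis of $\R^n$. Since $\scrR$ is the sampling operator restricted to $G^\perp$, the reproducing property of the residual kernel $k^R$ in $\scrH \cap G^\perp$ gives $\scrR^* e_j = k^R_{x_j}$, where $k^R_{x_j}(\cdot) = \sum_{\ell > p} \lambda_\ell v_\ell(x_j) v_\ell(\cdot)$ is the $G^\perp$ kernel section at the $j$th sample point. Interpreting $\tr_{L_2}(\scrR^*\scrR)$ as the squared $L_2$ Hilbert--Schmidt norm of $\scrR^*$ viewed as a map $\R^n \to L_2$, the trace expands along $\{e_j\}_{j=1}^n$:
\[
\tr_{L_2}(\scrR^*\scrR) \;=\; \sum_{j=1}^n \|\scrR^* e_j\|_{L_2}^2 \;=\; \sum_{j=1}^n \|k^R_{x_j}\|_{L_2}^2.
\]

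Second, I would Parseval-expand the $L_2$-norm in the orthonormal basis $\{v_\ell\}_{\ell > p}$ of $G^\perp \subset L_2(\mu)$, which gives
\[
\|k^R_{x_j}\|_{L_2}^2 \;=\; \sum_{\ell > p} \lambda_\ell^2\, v_\ell(x_j)^2.
\]
Third, I would take expectations. Since $x_j \sim \mu$ and $\|v_\ell\|_{L_2(\mu)} = 1$, we have $\E[v_\ell(x_j)^2] = 1$; Tonelli justifies exchanging the non-negative sum with the integral, so
\[
\E \|k^R_{x_j}\|_{L_2}^2 \;=\; \sum_{\ell > p} \lambda_\ell^2 \;=\; \tr(\scrT_{G^\perp}^2).
\]
Summing over $j = 1,\dots,n$ and using that the $x_j$ are identically distributed yields the claim.

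There is essentially no obstacle beyond bookkeeping. The only point that warrants a moment of care is matching the trace convention: $\tr_{L_2}$ is computed in the $L_2$ inner product rather than the $\scrH$ inner product, which is precisely what turns the single factor $\lambda_\ell$ (that would appear for an $\scrH$-trace, since $\|k^R_{x_j}\|_{\scrH}^2 = k^R(x_j,x_j) = \sum_\ell \lambda_\ell v_\ell(x_j)^2$) into the $\lambda_\ell^2$ in the final answer. Once that is pinned down, Mercer, Parseval, and linearity close the argument in three short lines.
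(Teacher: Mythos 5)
Your proof is correct and follows the same route as the paper's: expand $\tr_{L_2}(\scrR^*\scrR)$ as $\sum_{j=1}^n \|k^R_{x_j}\|_{L_2}^2$ (the paper writes this via $\scrR^*\scrR = \sum_j k^R_{x_j} \otimes k^R_{x_j}$), apply Parseval in the $\{v_\ell\}_{\ell>p}$ basis to get $\sum_{\ell>p}\lambda_\ell^2 v_\ell^2(x_j)$, and take expectation using $\E[v_\ell^2(x_j)]=1$. The observation about the $L_2$ versus $\scrH$ trace convention is a nice sanity check but the argument is otherwise the same.
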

Note that \Cref{lem:R_trace_exp}, \Cref{thm:reg_var_deterministic},
and the approximate identity $\scrR \scrR^* \approx (\tr \scrT_{G^\perp}) I_n$ combine to bound the variance error as $\norm{\epsilon}_{L_2}^2 \lesssim \frac{p}{n} + n \parens*{\sum_{\ell > p} \lambda_\ell^2} / \parens*{\sum_{\ell > p} \lambda_\ell}^2$.
This is identical to the bound provided in \cite{Bartlett2020}.

% We do not attempt here to state a high-probability upper bound on this quantity (except in the case of independent features below),
% since such a result requires messy (though often reasonable) assumptions on $k^R$ and variants.
% Similarly, we could but do not state concentration results for the quantities required to obtain a high-probability version of \Cref{thm:reg_var_deterministic}.
% \todo{Is this reasonable?}

% \vspace{-1mm}
\subsubsection{Bounded Orthonormal System}
% \vspace{-1mm}
Our results show that harmless interpolation can occur in much more general settings than independent and/or sub-Gaussian features.
An important class of features that are not independent or sub-Gaussian is a \emph{bounded orthonormal system} (BOS).

On the subspace $G$ defined before, the basis $v_1, \dots, v_p$ is a BOS if it is an orthonormal basis in $L_2$ (as we have already assumed) and, further, we have
\[
    \sum_{\ell = 1}^p v_\ell^2(x) \leq C p
\]
$\mu$-almost surely in $x$ for some constant $C \geq 1$.
Equivalently, for all $f \in G$, $\norm{f}_\infty^2 \leq Cp \norm{f}_{L_2}^2$.

This assumption is satisfied by many popular choices of features including sinusoids (see \Cref{sec:numerical}), Chebyshev polynomials, and the standard Euclidean basis on $\R^d$.
One can also often show that kernel eigenfunctions satisfy this property, such as when the data lie on a low-dimensional manifold \citep{McRae2020}.

It is easy to derive concentration inequalities for bounded orthonormal systems via matrix/operator concentrations results for sums of bounded independent random matrices (e.g., \cite{Tropp2015}---see our supplementary material for details).
A bound that is useful for our purposes is the following:
\begin{lemma}[BOS sampling operator on $G$]
	\label{lem:C_conc_BOS}
	If $G$ is spanned by a bounded orthonormal system with constant $C$,
	then, for $t > 0$, with probability at least $1 - e^{-t}$,
	\[
	\frac{1}{n} \norm{ \scrC^* \scrC - n \scrI_G}_{L_2} \lesssim \sqrt{\frac{C p (t + \log p)}{n}} + \frac{Cp (t + \log p)}{n}.
	\]
\end{lemma}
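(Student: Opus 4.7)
The plan is to reduce the operator $\scrC^*\scrC$ to a sum of independent bounded rank-one matrices and apply the matrix Bernstein inequality. Working in the coordinates given by the orthonormal basis $v_1,\dots,v_p$ of $G$, identify any $f \in G$ with its coefficient vector in $\R^p$. Under this identification, the $L_2$ inner product on $G$ is the standard Euclidean inner product, and $\scrA_G$ becomes the $n \times p$ matrix $A$ with entries $A_{i\ell} = v_\ell(x_i)$. Hence $\scrC^*\scrC$ corresponds to $A^T A = \sum_{i=1}^n v(x_i) v(x_i)^T$, where $v(x) = (v_1(x),\dots,v_p(x))^T \in \R^p$. Since $v_1,\dots,v_p$ are $L_2(\mu)$-orthonormal, we have $\E[v(x_i)v(x_i)^T] = I_p$, so the operator norm $\norm{\scrC^*\scrC - n\scrI_G}_{L_2}$ equals the spectral norm of $S_n := \sum_{i=1}^n Y_i$, where $Y_i := v(x_i)v(x_i)^T - I_p$ are i.i.d., centered, symmetric random matrices.

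Next I would verify the two inputs to matrix Bernstein. The BOS assumption gives $\norm{v(x)}_2^2 = \sum_{\ell=1}^p v_\ell(x)^2 \leq Cp$ almost surely, so each $v(x_i)v(x_i)^T$ is PSD with operator norm at most $Cp$; consequently $\norm{Y_i} \leq Cp + 1 \lesssim Cp$ almost surely (using $C \geq 1$). For the matrix variance,
\begin{align*}
\E[Y_i^2]
&= \E\bigl[\|v(x_i)\|_2^2\, v(x_i)v(x_i)^T\bigr] - I_p \\
&\preceq Cp\cdot \E[v(x_i)v(x_i)^T] = Cp\cdot I_p,
\end{align*}
so $\bigl\|\sum_{i=1}^n \E[Y_i^2]\bigr\| \leq nCp$.

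Applying the matrix Bernstein inequality (for example, the version in \cite{Tropp2015}) to $S_n$ on $\R^p$ yields, for all $u \geq 0$,
\[
\Pr\bigl\{\|S_n\| \geq u\bigr\} \leq 2p\,\exp\!\left(-\frac{u^2/2}{nCp + Cp\, u/3}\right).
\]
Choosing $u$ so that the right-hand side is $e^{-t}$ requires balancing the sub-Gaussian and sub-exponential regimes; solving in the usual way gives $u \lesssim \sqrt{nCp(t+\log p)} + Cp(t+\log p)$. Dividing by $n$ produces exactly the stated bound.

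I do not expect any real obstacle: the argument is a direct invocation of matrix Bernstein once the correct coordinates are chosen. The only care needed is in (i) correctly translating between the $L_2$-operator norm on $G$ and the Euclidean operator norm on $\R^p$, which is immediate because $\{v_\ell\}$ is $L_2$-orthonormal, and (ii) ensuring the variance proxy uses the BOS bound rather than an unnecessarily loose $(Cp)^2$ estimate, which is what makes the leading sub-Gaussian term scale as $\sqrt{Cp/n}$ rather than $Cp/\sqrt{n}$.
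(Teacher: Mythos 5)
Your proof is correct and is essentially the same argument as the paper's: the paper writes $\scrC^*\scrC - n\scrI_G = \sum_{i=1}^n \bigl(z(x_i)\otimes z(x_i) - \scrI_G\bigr)$ with $z(x) = \sum_{\ell=1}^p v_\ell(x)v_\ell$ (which is exactly your $v(x)v(x)^T - I_p$ in coordinates), bounds $\|z(x_i)\otimes z(x_i) - \scrI_G\|\le Cp$ and $\E X_i^2 \preceq Cp\,\scrI_G$ using the BOS condition, and invokes the matrix Bernstein inequality from \cite{Tropp2015}. No meaningful difference.
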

Thus if $n \gtrsim Cp \log p$, we can have, say, $\frac{1}{n} \norm{ \scrC^* \scrC - n \scrI_G}_{L_2} \leq 1/4$ (or any other small constant) with high probability.

In general, the $C p \log p$ sample complexity is optimal under the BOS assumption.
As a simple example, consider the following basis $\{v_1, \dots, v_p\}$ on $\R^p$ (written as functions on $\{1, \dots, p\}$):
for uniquely determined constants $c_1$ and $c_2$,
set the measure to be $\mu(\{j\}) = \frac{1}{Cp}$ for $j < p$ and $\mu(\{p\}) = c_1$,
and set $v_\ell = \sqrt{C p} \delta_{\ell}$ for $\ell < p$ and $v_p = c_2 \delta_p$.
One can easily verify by a coupon collector argument that we need $O(Cp \log p)$ samples from $\mu$ merely to sample every coordinate at least once.

% As a final remark, we note that the results from~\Cref{sec:generalresidual} imply that the variance term is well-controlled in very general settings.
% The BOS assumption is primarily needed to control the bias term.
% It is of substantial interest to identify situations that are even more general than the BOS condition under which the equivalent of Lemma~\ref{lem:C_conc_BOS} holds.

% If we already have explicit regularization ($\alpha > 0$), and we therefore only need an \emph{upper bound} on $\scrR \scrR^*$,
% we have the following:
% %(which is a simple application of \cite[Theorem 7.7.1]{Tropp2011}):
% \begin{lemma}[BOS residual Gram matrix---upper bound only]
%     \label{lem:R_conc_upper_BOS}
% 	Suppose that $G^\perp$ is $L_\infty$-bounded in the sense that
% 	\[
% 	    \max_{f \in G^\perp} \frac{\norm{f}_\infty^2}{\normH{f}^2} \leq D \tr \scrT_{G^\perp}.
% 	\]
% 	\vm{Is $D$ a universal constant? If so, specify here.}
% 	(Note that this is equivalent to $k^R(x, x) \leq D \tr \scrT_{G^\perp}$ almost surely in $x$.)
% 	Then, with high probability, \vm{Specify what w.h.p. is?}
% 	\begin{align*}
% 	    \norm{\scrR \scrR^*}
% 	    \lesssim n \lambda_{p+1} + D (\tr \scrT_{G^\perp}) \log d_p
% 	\end{align*}
% 	where $d_p = \frac{\tr \scrT_{G^\perp}}{\lambda_{p+1}}$.
% \end{lemma}
% \todo{Talk about implications of this (we have none right now)?}
% \vm{If we want to save space, I would say that this result is omittable. (when you say we have explicit regularization, you mean that we can avoid fitting noise entirely right?)}

\subsubsection{Independent Features}\label{sec:ind_feat}
To compare to prior work, we
list independent-feature concentration results that can be plugged into our \Cref{thm:reg_bias_deterministic}.
Suppose that for $x \sim \mu$,
the features $\{v_\ell(x)\}$ are independent random variables.
The key benefit this gives us is that we can now write the residual Gram matrix $\scrR \scrR^*$ as a sum of \emph{independent} random rank-1 matrices.
To see this, define the vectors
\[
	w_\ell = (v_\ell(x_1), v_\ell(x_2), \dots, v_\ell(x_n)) \in \R^n.
\]
We have already been assuming that the entries of each $w_\ell$ are independent (since they only depend on the independent variables $x_i$),
but an independent features assumption implies that the entire set of random vectors $\{w_\ell\}_{\ell \geq 1}$ is independent.
We can then write
\[
	\scrR \scrR^* = \sum_{\ell > p} \lambda_\ell w_\ell \otimes w_\ell.
\]
We state a formal result for sub-Gaussian independent features.
We expect similar results hold for much weaker tail conditions.
% \vm{What are these conditions? worth expanding on (maybe not for AISTATS again) if they lead to actual improvements.~\cite{Hastie2019} and all the follow-up work there only requires bounded $4$-th moments but also make stronger assumptions on the eigenvalue structure (e.g. the strongest results are for isotropic case in that original work). Follow-up work consists of random feature models.}
\begin{lemma}[Independent features residual Gram matrix]
	\label{lem:R_conc_ind}
	Suppose the features $\{v_\ell(x)\}_{\ell \geq 1}$ are zero-mean, independent, and sub-Gaussian.
	Then, for $t > 0$, with probability at least $1 - e^{-t}$,
	\[
		\norm{\scrR \scrR^* - (\tr \scrT_{G^\perp}) I_n } \lesssim \sqrt{(n + t) \tr(\scrT_{G^\perp}^2)} + (n + t) \lambda_{p+1}.
	\]
\end{lemma}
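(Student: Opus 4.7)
The plan is to exploit the independence of features to write
\[
M \;:=\; \scrR\scrR^* - (\tr \scrT_{G^\perp})\, I_n \;=\; \sum_{\ell > p} \lambda_\ell \bigl(w_\ell w_\ell^\top - I_n\bigr)
\]
as a sum of independent, mean-zero, self-adjoint random matrices; independence across $\ell$ is immediate from the hypothesis, and $\E[w_\ell w_\ell^\top] = I_n$ because for each $\ell$ the coordinates $\{v_\ell(x_i)\}_i$ are i.i.d.\ zero-mean unit-variance (unit variance holds because $\{v_\ell\}$ is orthonormal in $L_2(\mu)$). I would then combine a scalar sub-exponential Bernstein inequality on a fixed direction with an $\varepsilon$-net argument to pass to an operator-norm bound. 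I specifically avoid a direct matrix Bernstein because its dimension factor would force a $\log n$ in the exponent, whereas the statement demands $(n+t)$; the dimension-free scaling requires controlling a quadratic form in a single direction and paying an $e^{n}$ cost through a net on $S^{n-1}$.

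For fixed $u \in S^{n-1}$, set $Z_\ell = \langle u, w_\ell \rangle = \sum_i u_i v_\ell(x_i)$. As a unit-$\ell_2$-coefficient linear combination of independent zero-mean unit-variance sub-Gaussians, $Z_\ell$ is itself sub-Gaussian with $\|Z_\ell\|_{\psi_2} \lesssim 1$, and the random variables $\{Z_\ell\}_{\ell>p}$ are mutually independent. Consequently
\[
u^\top M u \;=\; \sum_{\ell > p} \lambda_\ell (Z_\ell^2 - 1)
\]
is a sum of independent centered sub-exponentials, with per-term Orlicz norm $\|\lambda_\ell(Z_\ell^2-1)\|_{\psi_1} \lesssim \lambda_\ell \le \lambda_{p+1}$ and total variance proxy at most a constant times $\sum_{\ell>p} \lambda_\ell^2 = \tr(\scrT_{G^\perp}^2)$. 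Bernstein's inequality then yields, for every $s > 0$,
\[
|u^\top M u| \;\lesssim\; \sqrt{\tr(\scrT_{G^\perp}^2)\, s} \;+\; \lambda_{p+1}\, s
\]
with probability at least $1 - 2e^{-s}$.

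Finally, I would fix a $\tfrac{1}{4}$-net $\mathcal{N} \subset S^{n-1}$ with $|\mathcal{N}| \le 9^n$, apply the previous tail bound with $s = c_0 n + t$ for an appropriate absolute constant $c_0$, and union-bound over $\mathcal{N}$: the scalar bound then holds simultaneously for all $u \in \mathcal{N}$ with probability at least $1 - e^{-t}$. Since $M$ is self-adjoint, the standard discretization estimate $\|M\| \le 2\sup_{u \in \mathcal{N}} |u^\top M u|$ upgrades this to the operator-norm bound claimed in the lemma. The most delicate step is the scalar Bernstein computation, specifically verifying the $\psi_1$-norm and variance proxy for $\lambda_\ell(Z_\ell^2 - 1)$ from the sub-Gaussian hypothesis; these are standard but require keeping track of the right constants. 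The $\varepsilon$-net machinery is entirely routine once the scalar tail is in hand, and it is precisely this step that lets the $9^n$ net cardinality be absorbed into $e^{-(n+t)}$ and produces the dimension-free $(n+t)$ exponent in place of the $(\log n + t)$ one would get from an off-the-shelf matrix Bernstein.
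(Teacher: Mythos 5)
Your proposal is correct and follows essentially the same route as the paper's own proof: write the quadratic form $u^\top(\scrR\scrR^* - (\tr\scrT_{G^\perp})I_n)u$ as a sum over $\ell > p$ of independent centered sub-exponential variables $\lambda_\ell(\langle w_\ell,u\rangle^2 - 1)$, apply a scalar Bernstein inequality for fixed $u\in S^{n-1}$, then upgrade to an operator-norm bound by a covering argument on the sphere (the paper cites \cite[Exercise 4.4.3]{Vershynin2018} for precisely this step). Your remarks on why a direct matrix Bernstein would introduce an unwanted $\log n$, and your explicit tracking of the $\psi_1$ norm and variance proxy, are sound elaborations but do not change the argument.
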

The zero-mean assumption is for simplicity and can easily be relaxed at the cost of a more complicated theorem statement.
Note that this is stronger than \Cref{lem:R_conc_generic} in two ways:
first, the bound holds with exponentially high probability as opposed to being merely in expectation.
Second, we have effectively replaced the $n^2$ in \Cref{lem:R_conc_generic} by $n$,
greatly reducing the amount of overparametrization we need.

Note for this result to give $\alpha_L I_n \preceq \scrR \scrR^* \preceq \alpha_U I_n$
where $\alpha_U / \alpha_L$ is bounded,
we need
% \vspace{-2 mm}
\[
	n \lesssim \frac{\tr \scrT_{G^\perp}}{\lambda_{p+1}} = \frac{1}{\lambda_{p+1}} \sum_{\ell > p} \lambda_\ell
\]
% \vspace{-4 mm}
(this also gives us $\sqrt{n \tr(\scrT_{G^\perp}^2)} \lesssim \tr \scrT_{G^\perp} $ by Cauchy-Schwartz).
This is identical to the requirement that $r_{k^*}(\Sigma) \geq b n$ in \cite{Bartlett2020}.

We can also obtain slightly improved results (vs.\ the BOS assumption) for concentration of $\scrC^* \scrC$:
\begin{lemma}[Sampling operator on $G$ under independent features]
    With probability at least $1 - e^{-t}$,
    \[\norm*{\frac{1}{n} \scrC^* \scrC - \scrI_G}_{L_2}
    \lesssim \sqrt{\frac{p+t}{n}} + \frac{p+t}{n}.\]
\end{lemma}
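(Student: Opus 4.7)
The plan is to identify $\scrC^{*}\scrC$ on $G$ with a $p \times p$ sample-covariance matrix in the eigenbasis, and then invoke a standard spectral-norm concentration bound for sample covariances of sub-Gaussian random vectors.

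First, since $\{v_1,\dots,v_p\}$ is an $L_2$-orthonormal basis of $G$, the coefficient map $f = \sum_{\ell \le p} a_\ell v_\ell \mapsto a \in \R^p$ is an isometry from $(G,\norm{\cdot}_{L_2})$ to $(\R^p,\norm{\cdot}_{2})$. A short direct calculation from $(\scrC f)_i = f(x_i)$ and $\scrC^{*} = \scrT_G^{-1}\scrA_G^{*}$ shows that under this identification $\scrC^{*}\scrC$ is represented by the matrix
\[ M \;=\; \sum_{i=1}^{n} \phi_i \phi_i^{T} \in \R^{p \times p}, \qquad \phi_i \;=\; \bigl(v_1(x_i),\dots,v_p(x_i)\bigr)^{T}. \]
$L_2$-orthonormality of the $v_\ell$'s gives $\E[\phi_i\phi_i^{T}] = I_p$, so $\E M = n I_p$, and the claimed $L_2$-operator-norm bound is precisely the matrix statement $\norm*{\tfrac{1}{n} M - I_p} \lesssim \sqrt{(p+t)/n} + (p+t)/n$.

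Next, I would verify that each $\phi_i$ is a sub-Gaussian random vector in $\R^p$ with $\norm{\phi_i}_{\psi_2}$ bounded by an absolute constant. Under the independent-features / sub-Gaussian hypothesis carried over from \Cref{sec:ind_feat}, the coordinates of $\phi_i$ are independent, zero-mean, unit-variance, and have uniformly bounded sub-Gaussian norm; a standard computation on one-dimensional projections (Hoeffding for sums of independent sub-Gaussians) then yields $\norm{\phi_i}_{\psi_2} \lesssim 1$.

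Finally, I would apply a standard spectral-norm concentration bound for sample covariance of i.i.d.\ isotropic sub-Gaussian vectors — e.g.\ Theorem 4.6.1 in Vershynin's \emph{High-Dimensional Probability}, or, equivalently, matrix Bernstein applied to the mean-zero sub-exponential matrices $\phi_i \phi_i^{T} - I_p$. This delivers $\norm*{\tfrac{1}{n}M - I_p} \lesssim \sqrt{(p+t)/n} + (p+t)/n$ with probability at least $1 - 2e^{-t}$, and rescaling the absolute constant in the exponent absorbs the factor of two to yield the stated $1 - e^{-t}$ bound. There is essentially no difficult step: once the matrix representation is in place this is a textbook application, and the argument is notably cleaner than the residual-Gram-matrix concentration of \Cref{lem:R_conc_ind} because here $p$ is fixed and finite, so standard $p$-dimensional sub-Gaussian covariance estimates apply directly without any additional structure on the eigenvalues.
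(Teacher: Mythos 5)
Your proposal is correct and matches the paper's (uncited) approach: the paper's entire ``proof'' of this lemma is a pointer to Section~4.6 of Vershynin's book, which is exactly the sub-Gaussian-covariance concentration theorem you invoke. You usefully spell out the bookkeeping the paper elides: the $L_2$-orthonormal eigenbasis identifies $(G,\norm{\cdot}_{L_2})$ isometrically with $\R^p$, under which $\scrC^{*}\scrC$ becomes the Gram matrix $\sum_i \phi_i\phi_i^{T}$ of the rows $\phi_i = (v_1(x_i),\dots,v_p(x_i))$, which are i.i.d.\ isotropic and sub-Gaussian under the standing independent-features/sub-Gaussian hypothesis of that subsection. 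After the change of variables $t \mapsto \sqrt{t}$ in Vershynin's Theorem~4.6.1, the bound $\max(\delta,\delta^2)$ with $\delta \lesssim \sqrt{(p+t)/n}$ reduces to the stated $\sqrt{(p+t)/n} + (p+t)/n$. This is the same route; no gaps.
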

Thus we only require $n \gtrsim p$ to obtain the required concentration.
For a proof, see, for example, \cite[Section 4.6]{Vershynin2018}.

% \subsection{Discussion}
% \todo{Do we need anything here? We could summarize and/or compare to prior work again.}

% \vspace{-2mm}
\subsection{Informal Proof Sketch (Deterministic)}
\label{sec:proof_sketch}
% \vspace{-1mm}
Here we outline the basic proof structure of \Cref{thm:reg_bias_deterministic,thm:reg_var_deterministic}.
We will perform the analysis as though $\alpha I_n + \scrR \scrR^* = \alphabr I_n$ and $\scrC^* \scrC = n \scrI_G$ (equivalently, $\scrA_G^* \scrA = n \scrT_G$),
and we will write ``$\approx$'' where we make these substitutions.
The main additional steps we need are to quantify the error due to these approximations.

% \vspace{-1mm}
\subsubsection{Bias Term (from Signal)}
% \vspace{-1mm}
Note that since $f^* \in G$, we have
\begin{align*}
	\fhat_0
	&= \scrA^* (\alpha I_n + \scrA \scrA^*)^{-1} \scrA_G f^* \\
	&\approx \scrA^* (\alphabr I_n + \scrA_G \scrA_G^*)^{-1} \scrA_G f^* \\
	&= \scrA^* \scrA_G (\alphabr \scrI_G + \scrA_G^* \scrA_G)^{-1} f^* \\
	&\approx \begin{bmatrix*} \scrA_G^* \\ \scrR^* \end{bmatrix*} \scrA_G (\alphabr \scrI_G + n \scrT_G)^{-1} f^*.
\end{align*}
From this we obtain
\begin{align*}
	\scrP_G(\fhat_0)
	&= \scrA_G^* \scrA_G (\alphabr \scrI_G + n \scrT_G)^{-1} f^* \\
	&\approx n \scrT_G (\alphabr \scrI_G + n \scrT_G)^{-1} f^* \\
	&= \scrSbr f^*,
\end{align*}
where $\scrSbr \coloneqq n \scrT_G (\alphabr \scrI_G + n \scrT_G)^{-1}$ is the idealized ``survival'' operator, representing the extent to which the original signal $f^*$ is preserved.
We then have $f^* - \scrP_G(\fhat_0) \approx \scrBbr f^*$,
where $\scrBbr \coloneqq \scrI_G - \scrSbr = \alphabr (\alphabr \scrI_G + n \scrT_G)^{-1}$ is the idealized ``bias'' operator.
One can %easily 
verify that
\[
	\norm{\scrBbr}_{\scrH \to L_2} \lesssim \min\braces*{\sqrt{\lambda_1}, \frac{\alphabr}{n \sqrt{\lambda_p}}, \sqrt{\frac{\alphabr}{n}} }.
\]
This bounds $\norm{\scrP_G(\fhat_0) - f^*}_{L_2}$ for \Cref{thm:reg_bias_deterministic};
the formal theorem has an extra factor of $1/(1-c)$ which comes from the approximation errors (recall that $c$ is determined by how accurate our idealizing approximation are---see the statement of \Cref{thm:reg_bias_deterministic} for the precise definition).

Next, note that $\scrP_{G^\perp}(\fhat_0) \approx \scrR^* \scrC \frac{\scrBbr}{\alphabr} f^*$, where we have substituted $\scrC$ for $\scrA_G$.
Because
\[
	\normsub{\scrR^*}{\ell_2 \to L_2} \leq \normsub{\scrI_{G^\perp}}{\scrH \to L_2} \normsub{\scrR^*}{\ell_2 \to \scrH}
	\lesssim \sqrt{\lambda_{p+1} \alphabr},
\]
we have
\begin{align*}
    \normsub*{\scrR^* \scrC \frac{\scrBbr}{\alphabr}}{\scrH \to L_2}
    &\leq \normsub{\scrR^*}{\ell_2 \to L_2} \normsub{\scrC}{L_2 \to \ell_2} \frac{\normsub{\scrBbr}{\scrH \to L_2}}{\alphabr} \\
    &\lesssim \sqrt{\frac{n \lambda_{p+1}}{\alphabr}} \normsub{\scrBbr}{\scrH \to L_2},
\end{align*}
which allows us to bound $\normsub{\scrP_{G^\perp}(\fhat_0)}{L_2}$.

% \vspace{-1mm}
\subsubsection{Variance Term (from Noise)}
% \vspace{-1mm}
Making similar approximations as above (with $\alphatl$ instead of $\alphabr$ -- the distinction comes from the approximation arguments we use in the formal proof), we have
\begin{align*}
	\epsilon
	&= \scrA^* ( \alpha I_n + \scrA \scrA*)^{-1} \xi \\
	&\approx \begin{bmatrix*} \scrA_G^* \\ \scrR^* \end{bmatrix*} ( \alphatl I_n + \scrA_G \scrA_G^*)^{-1} \xi \\
	&= \begin{bmatrix*}
		(\alphatl \scrI_G + \scrA_G^* \scrA_G)^{-1} \scrA_G^* \xi \\
		\scrR^* ( \alphatl I_n + \scrA_G \scrA_G^*)^{-1} \xi
	\end{bmatrix*} \\
	&\approx \begin{bmatrix*}
		(\alphatl \scrT_G^{-1} + n \scrI_G)^{-1} \scrC^* \xi \\
		\scrR^* ( \alphatl I_n + \scrA_G \scrA_G^*)^{-1} \xi
	\end{bmatrix*}.
\end{align*}
Therefore,
\begin{align*}
	\E_\xi \norm{\epsilon}_{L_2}^2
	&\approx \sigma^2 \norm*{ \begin{bmatrix*}
			(\alphatl \scrT_G^{-1} + n \scrI_G)^{-1} \scrC^* \\
			\scrR^* ( \alphatl I_n + \scrA_G \scrA_G^*)^{-1}
	\end{bmatrix*} }_{HS, \ell_2 \to L_2}^2 \\
	&\lesssim \sigma^2 \parens*{ \frac{1}{n^2} \tr_{L_2} (\scrC^* \scrC) + \frac{1}{\alphatl^2} \tr_{L_2}(\scrR^* \scrR) } \\
	&\approx \sigma^2 \parens*{ \frac{1}{n^2} \tr_{L_2} (n \scrI_G) + \frac{1}{\alphatl^2} \tr_{L_2}(\scrR^* \scrR) } \\
	&= \sigma^2 \parens*{ \frac{p}{n} + \frac{1}{\alphatl^2} \tr_{L_2}(\scrR^* \scrR) }.
\end{align*}
The factor of $\alpha_U/\alpha_L$ comes from the approximation arguments.

% \vspace{-3mm}
\section{KERNEL CLASSIFICATION}
\label{sec:class}
% \vspace{-2mm}
%\subsection{General classification framework}
We now consider the case of classification, in which the observation $y$ is a (noisy) binary label in $\{-1, 1\}$ with a distribution depending on $x$.
Our approach is to perform ordinary linear/kernel regression on the binary labels $y_i$ with the squared loss function.
Although this seems counter-intuitive,
recent results (e.g., \cite{Hui2021}) have shown that training with the squared-loss is highly competitive with the more common cross-entropy loss function in practice.
Separately, recent results have also shown that regression on binary labels is, in some interesting overparametrized cases,
equivalent to the maximum-margin SVM (e.g., \cite{Muthukumar2021,Hsu2021}). 
Inspired by these findings, we study the minimum-$\ell_2$-norm interpolator of the binary labels $\{y_i\}_{i=1}^n$ and its ensuing \emph{classification} error.

Through the lens of regression,
our target function $f^*$ is now replaced by
\[
\eta^*(x) \coloneqq \E (y \given x) = 2 \P(y = 1 \given x) - 1.
\]
The label noise is $\xi = y - \eta^*(x)$. Note that $\E[\xi|x] = 0$ by definition, and $\var(\xi \given x) = 1 - (\eta^*)^2(x)$.
Our assumption on the label noise model is that $\eta^* \in G$.

The regression procedure yields an estimator $\etahat$ of $\eta^*$.
Then, our classification rule is given by $\yhat = \sign(\etahat)$.
Given a probability distribution $\mu$ over $x$,
the \emph{excess risk} of the classification rule with respect to the Bayes-optimal classifier is given by
\[
\scrE \coloneqq \P(\yhat \neq y) - \P(y \neq \sign (\eta^*)).
\]
Standard calculations
 (see \cite{Friedman1997}) give
\[
\scrE = \int \abs{\eta^*(x)} \indicator{ \sign(\etahat(x)) \neq \sign(\eta^*(x))} d\mu(x).
\]
Thus the excess risk is the average of the sign error of $\etahat$ versus $\eta^*$ modulated by how distinguishable the two classes are (which is represented by $\abs{\eta^*}$).

To bound $\scrE$,
we decompose our estimate $\etahat$ as
\begin{equation}
	\label{eq:est_decomp}
	\etahat = s \eta^* + \etahat_r,
\end{equation}
where $s$ is a parameter that we can tune in our analysis, and $\etahat_r$ is the residual.
If $s > 0$, we have
\[
\braces{ \sign(\etahat) \neq \sign(\eta^*) } \subseteq \braces{ \abs{\etahat_r} \geq s \abs{\eta^*} },
\] 
so
\begin{equation}
	\label{eq:riskboundratio}
	\scrE \leq \frac{1}{s} \int \abs{\etahat_\perp(x)} \ d\mu(x) = \frac{\norm{\etahat_r}_{L_1}}{s} \leq \frac{\norm{\etahat_r}_{L_2}}{s},
\end{equation}
where the norm inequality is due to the fact that $\mu$ is a probability measure.
For reasons that will shortly become clear, 
we will call $s$ the \emph{survival factor} and $\etahat_r$ the \emph{residual}.

A first possible choice for the quantities in \eqref{eq:est_decomp} is $s = 1$ and $\etahat_r = \etahat - \eta^*$.
This choice yields $\scrE \leq \norm{\etahat - \eta}_{L_1}$; therefore, small regression error implies small excess classification risk.
However, we are interested in cases in which the regression error is not small but the classification error is.
To use the bound \eqref{eq:riskboundratio}, we would need to show that we can have the ratio $\norm{\etahat_r}_{L_2}/s$ be very small with a different choice of $s \ll 1$.

We now show how this can work.
We recall the idealized ``survival" and ``bias" operators $\scrSbr$ and $\scrBbr$ from \Cref{sec:proof_sketch}.
Note that to bound the regression error we show that $\etahat \approx \scrSbr(\eta^*)$ and that $\norm{\eta^* - \scrSbr\eta^*}_{L_2} = \norm{\scrBbr \eta^*}_{L_2}$ is small.
For the classification problem, an interesting new possibility arises.
As a simple example, suppose all the first $p$ eigenvalues $\lambda_1, \dots, \lambda_p$ are identically $1$.
Then $\scrSbr = \frac{n}{\alphabr + n} \scrI_G$.
If $\alphabr \ll n$,  then the ideal bias $\scrBbr = \frac{\alphabr}{\alphabr + n} \scrI_G$ will be small.
However, what if $\alpha \gtrsim n$, in which case the bias is not small?
We cannot get small regression error, but for classification,
we can apply~\eqref{eq:riskboundratio} while choosing $s = \frac{n}{\alphabr + n}$.
Then, as long as
\[
    \norm{\etahat - \scrSbr \eta^*}_{L_2} \ll \frac{n}{\alphabr + n},
\]
we will have small excess classification risk.
In~\Cref{sec:bilevel}, we use this observation to provide sufficient conditions for classification consistency, and demonstrate that these conditions are significantly weaker than the ones needed to be regression-consistent.
This apporach is qualitatively similar to that of \cite{Muthukumar2021},
which provides a slightly sharper bound but relies on a special form of $\eta^*$ and Gaussianity of the features.
Their techniques do not easily extend to a more general setting.

To combine this framework with our previous interpolation results,
note that, under our new notation, $\etahat = \etahat_0 + \epsilon$,
where $\etahat_0 = \scrA^* (\scrA \scrA^*)^{-1} \scrA \eta^*$ and $\epsilon = \scrA^* (\scrA \scrA^*)^{-1} \xi$.
We will show that $\etahat_0 \approx \scrSbr \eta^*$ and $\epsilon$ is small.
For the first objective,
we present here a more refined version of \Cref{thm:reg_bias_deterministic} that bounds the error to $\scrSbr \eta^*$ rather than $\eta^*$ itself.
This will be used to characterize the classification error in~\Cref{sec:bilevel}.
\begin{lemma}[More refined bias estimate]
    \label{lem:approx_error}
    Under the conditions of \Cref{thm:reg_bias_deterministic}
    (assuming $c$ is bounded away from 1 so that $(1 - c)^{-1}$ is subsumed into the constants),
    \begin{align*}
        \norm{\etahat_0 - \scrSbr \eta^*}_{L_2}
        &\lesssim \parens*{ c + \sqrt{\frac{n \lambda_{p+1}}{\alphabr}} } \\
        &\qquad \times \min\braces*{\lambda_1, \frac{\alphabr}{n \sqrt{\lambda_p}}, \sqrt{\frac{\alphabr}{n}} } \normH{\eta^*}.
    \end{align*}
\end{lemma}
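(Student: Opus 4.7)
The plan is to decompose $\etahat_0 - \scrSbr \eta^*$ into its projections onto $G$ and $G^\perp$ and bound each separately. Since $\scrSbr$ maps $G$ into itself and $\eta^* \in G$, we have $\scrSbr \eta^* \in G$, so $\scrP_{G^\perp}(\scrSbr \eta^*) = 0$ and
\[
    \etahat_0 - \scrSbr \eta^* = \parens*{\scrP_G \etahat_0 - \scrSbr \eta^*} + \scrP_{G^\perp} \etahat_0.
\]
Each piece contributes one of the two summands in the prefactor $c + \sqrt{n\lambda_{p+1}/\alphabr}$, multiplied by $\normsub{\scrBbr}{\scrH\to L_2} \normH{\eta^*}$.

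For the $G^\perp$ piece, I would follow essentially verbatim the argument sketched after \Cref{thm:reg_bias_deterministic}: write $\scrP_{G^\perp} \etahat_0 = \scrR^*(\alpha I_n + \scrA \scrA^*)^{-1} \scrA_G \eta^*$, pass to the idealization $\scrR^* \scrC \scrBbr \eta^* / \alphabr$ at a cost absorbed into the implicit $(1-c)^{-1}$ constant, and bound $\normsub{\scrR^*}{\ell_2 \to L_2} \leq \sqrt{\lambda_{p+1}}\, \normsub{\scrR^*}{\ell_2 \to \scrH} \lesssim \sqrt{\lambda_{p+1} \alphabr}$ together with $\normsub{\scrC}{L_2 \to \ell_2} \lesssim \sqrt{n}$. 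Combining with the spectral bound $\normsub{\scrBbr}{\scrH \to L_2} \lesssim \min\braces*{\sqrt{\lambda_1},\, \alphabr/(n\sqrt{\lambda_p}),\, \sqrt{\alphabr/n}}$ yields the $\sqrt{n\lambda_{p+1}/\alphabr}$ contribution.

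For the $G$ piece, I would use the identity $\scrA_G^*(\alphabr I_n + \scrA_G \scrA_G^*)^{-1} = (\alphabr \scrI_G + \scrA_G^* \scrA_G)^{-1} \scrA_G^*$ together with a first-order resolvent expansion around the two idealized operators. Writing $\alpha I_n + \scrA \scrA^* = \alphabr I_n + \scrA_G \scrA_G^* + \Delta$ with $\Delta = (\alpha I_n + \scrR \scrR^*) - \alphabr I_n$, and afterwards replacing $\scrA_G^* \scrA_G$ by $n\scrT_G$ (using the relation $\scrA_G^* = \scrT_G \scrC^*$ coming from the isometry $\scrT_G^{1/2}: L_2 \to \scrH$), the zeroth-order term is $n\scrT_G(\alphabr \scrI_G + n\scrT_G)^{-1} \eta^* = \scrSbr \eta^*$, which cancels exactly against the target. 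The first-order perturbation terms have the form (resolvent)$\cdot \Delta \cdot$(resolvent)$\cdot \scrA_G \eta^*$, together with an analogous term arising from $\scrC^* \scrC - n \scrI_G$, and routine spectral bounds show that each is of size at most $c \cdot \normsub{\scrBbr}{\scrH \to L_2} \normH{\eta^*}$.

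The main obstacle is bookkeeping the two perturbations consistently, since one lives in the $\ell_2$ operator norm on $\R^n$ (contributing $(\alpha_U - \alpha_L)/(\alpha_U + \alpha_L)$) while the other lives in the $L_2$ operator norm on $G$ (contributing $\tfrac{2}{n}\normsub{\scrC^* \scrC - n \scrI_G}{L_2}$); the relation $\scrA_G^* = \scrT_G \scrC^*$ lets both be rewritten in a compatible form so that they add up to precisely the quantity $c$ from the hypothesis of \Cref{thm:reg_bias_deterministic}. The key conceptual point -- the reason the prefactor is $c + \sqrt{n\lambda_{p+1}/\alphabr}$ rather than the $1 + \sqrt{n\lambda_{p+1}/\alphabr}$ of \Cref{thm:reg_bias_deterministic} -- is that comparing $\etahat_0$ to $\scrSbr \eta^*$ instead of to $\eta^*$ cancels the dominant $\scrBbr \eta^*$ contribution to the bias on $G$, leaving only the approximation error, which is of order $c$ rather than $1$.
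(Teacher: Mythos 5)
Your proposal is correct and takes essentially the same route as the paper. The paper's proof of this lemma is a one-line modification of the proof of \Cref{thm:reg_bias_deterministic}: the $G^\perp$ part is bounded exactly as before, and on $G$ one observes $\scrP_G(\etahat_0) - \scrSbr\eta^* = (\scrS - \scrSbr)\eta^* = (\scrBbr - \scrB)\eta^*$ and applies \Cref{lem:bias_approx_error} (which already contains the Neumann-series argument you sketch and delivers $\normsub{\scrB - \scrBbr}{\scrH\to L_2}\leq \tfrac{c}{1-c}\normsub{\scrBbr}{\scrH\to L_2}$); your inlined resolvent expansion and your identification of the key cancellation of $\scrBbr\eta^*$ are exactly what that lemma encapsulates.
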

The proof of Lemma~\ref{lem:approx_error} is an easy modification of the proof of \Cref{thm:reg_bias_deterministic} (see \Cref{app:bias_proof}).

% \vspace{-3mm}
\subsection{Bi-level Ensemble Asymptotic Analysis}
\label{sec:bilevel}
% \vspace{-1mm}
We now examine the implications of this refined classification analysis in a bounded orthonormal system (BOS).
In particular, suppose that the eigenfunctions are all bounded (e.g., a Fourier series for periodic functions on an interval).
For the eigenvalues, we consider the bi-level ensemble as defined in \cite{Muthukumar2021} with non-negative parameters $n, \beta, q, r$ (where $\beta > 1$ and $r < 1$).
This ensemble contains $d = n^\beta$ features, of which $p = n^r$ have ``large'' eigenvalues, and the remaining $d - p$ eigenvalues are small and their relative magnitude depends on the parameter $q$.
Specifically, we set
\begin{align}\label{eq:bilevel}
\lambda_\ell = \begin{cases}
	1, & 1 \leq \ell \leq p \\
	n^{-(\beta - r - q)}, & p < \ell \leq d.
\end{cases}
\end{align}
We require $q < \beta - r$ to ensure that the ``small'' eigenvalues are actually smaller than 1.

\begin{corollary}
    \label{cor:bilevel_asym}
    Consider the bi-level ensemble with parameters $n, \beta, q, r$, and suppose that the eigenfunctions are all bounded by an absolute constant.
    Further, suppose that $\beta > 2$ and $r < 1$,
    and $\eta^* \in G$.
    Then we obtain the following asymptotic results as $n \to \infty$:
    \begin{enumerate}
        \item If $q < 1 - r$, as $n \to \infty$, $\norm{\etahat - \eta^*}_{L_2} \to 0$ in probability, and therefore both regression and classification are consistent.
        \item If $q > 1 - r$, $\norm{\etahat}_{L_2} \to 0$ in probability, and therefore regression is inconsistent for nonzero $\eta^*$.
        \item If $q < \frac{3}{2}(1 - r)$ and $\beta > 2(r + q)$,
        excess classification risk $\scrE \to 0$ in probability, that is, classification is consistent.
    \end{enumerate}
\end{corollary}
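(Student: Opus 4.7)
My plan is to specialize the deterministic bounds of \Cref{thm:reg_bias_deterministic,thm:reg_var_deterministic,lem:approx_error} to the bi-level ensemble~\eqref{eq:bilevel} and read off the resulting exponents. With $\alpha = 0$ the key spectral quantities are $p = n^r$, $\tr \scrT_{G^\perp} \asymp n^{r+q}$, $\tr(\scrT_{G^\perp}^2) \asymp n^{2(r+q)-\beta}$, $\lambda_1 = \lambda_p = 1$, and $\lambda_{p+1} = n^{-(\beta-r-q)}$, so I choose $\alphabr \asymp \alphatl \asymp n^{r+q}$. I first verify the hypotheses of \Cref{thm:reg_bias_deterministic} hold with $c = o(1)$. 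The BOS bound \Cref{lem:C_conc_BOS} gives $\tfrac{1}{n}\|\scrC^*\scrC - n\scrI_G\| \lesssim \sqrt{p\log p / n} \to 0$ since $r < 1$, and \Cref{lem:R_conc_generic}, combined with the bounded-eigenfunction assumption and the equal tail eigenvalues, yields $\|\scrR\scrR^* - \tr\scrT_{G^\perp}\,I_n\|/\tr \scrT_{G^\perp} = O(n^{(2-\beta)/2}) \to 0$ in probability under $\beta > 2$; hence $\alpha_U/\alpha_L = 1 + o(1)$.

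Substituting these exponents into \Cref{thm:reg_var_deterministic} gives $\E\|\epsilon\|_{L_2}^2 \lesssim n^{r-1} + n^{1-\beta} \to 0$. Substituting into \Cref{thm:reg_bias_deterministic} gives a bias of order $\min\{1, n^{r+q-1}, n^{(r+q-1)/2}\}\cdot(1 + n^{(1-\beta)/2})$, which equals $n^{r+q-1}$ if $r+q \leq 1$ and is $O(1)$ otherwise. Part 1 ($q < 1-r$) then follows immediately: bias and variance both vanish, $\|\etahat - \eta^*\|_{L_2} \to 0$, and classification consistency follows from $\scrE \leq \|\etahat - \eta^*\|_{L_1} \leq \|\etahat - \eta^*\|_{L_2}$.

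For part 2 ($q > 1-r$), I observe that since the eigenvalues on $G$ are all $1$, the survival operator is $\scrSbr = \tfrac{n}{n+\alphabr}\scrI_G$, so $\|\scrSbr\eta^*\|_{L_2} \asymp n^{1-r-q}\|\eta^*\|_{L_2} \to 0$. \Cref{lem:approx_error} gives $\|\etahat_0 - \scrSbr\eta^*\|_{L_2} \lesssim (c + n^{(1-\beta)/2})\|\eta^*\|_\scrH \to 0$, and since the variance also vanishes, $\|\etahat\|_{L_2} \to 0$. For part 3 I apply the classification framework of \Cref{sec:class} with $s = n/(n+\alphabr) \asymp n^{\min(0,1-r-q)}$. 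Since $\scrSbr\eta^* = s\eta^*$ on $G$, the residual decomposes as $\etahat_r = \etahat - s\eta^* = (\etahat_0 - \scrSbr\eta^*) + \epsilon$, and \eqref{eq:riskboundratio} yields $\scrE \leq \|\etahat_r\|_{L_2}/s$. The two hypotheses are calibrated so each contribution is $o(s)$: $\beta > 2(r+q)$ makes the residual-Gram error $n^{(2-\beta)/2}$ and the factor $n^{(1-\beta)/2}$ smaller than $n^{1-r-q}$, while $q < \tfrac{3}{2}(1-r)$ makes the BOS approximation error $\sqrt{p\log p/n}$ and the variance contribution $\sqrt{p/n}$ smaller than $n^{1-r-q}$.

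The main technical obstacle is verifying the residual concentration step under only the bounded-eigenfunction assumption: the $\|k^R - \tr\scrT_{G^\perp}\|_\infty$ term in \Cref{lem:R_conc_generic} must be shown not to dominate, which requires exploiting the equal-eigenvalue tail structure (as happens exactly for the Fourier-type bases motivating the bi-level model). Once this is in place, the remainder is a careful but mechanical exponent-chase, and the sharp cut-offs $\beta = 2(r+q)$ and $q = \tfrac{3}{2}(1-r)$ emerge precisely from requiring each error contribution in $\|\etahat_r\|_{L_2}$ to beat the vanishing survival factor $s \asymp n^{1-r-q}$ in the classification bound.
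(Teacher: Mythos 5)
Your proof is correct and follows essentially the same route as the paper's: specialize the deterministic bias and variance bounds (\Cref{lem:approx_error} and \Cref{thm:reg_var_deterministic}) to the bi-level spectrum with $\alphabr \asymp \alphatl \asymp n^{r+q}$, invoke \Cref{lem:C_conc_BOS,lem:R_conc_generic} to verify $c = o(1)$, read off the survival factor $s = n/(n+\alphabr)$ from $\scrSbr$ (using $\scrT_G = \scrI_G$), and chase exponents to obtain the three regimes. Your identification of which error term produces each cutoff matches the paper exactly: $q = \frac{3}{2}(1-r)$ from requiring the $n^{(r-1)/2}$ contributions (variance and BOS concentration) to beat $s \asymp n^{1-r-q}$, and $\beta = 2(r+q)$ from requiring the residual-Gram deviation $n^{1-\beta/2}$ to beat $s$.

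One observation: you rightly flag that the $\norm{k^R(\cdot,\cdot) - \tr\scrT_{G^\perp}}_\infty$ term in \Cref{lem:R_conc_generic} is not automatically negligible under the bare bounded-eigenfunction hypothesis. The paper's own proof silently drops this term when asserting $\frac{\alpha_U - \alpha_L}{\alpha_U + \alpha_L} \lesssim \frac{1}{\alphabr}\sqrt{n^2\sum_{\ell>p}\lambda_\ell^2}$. Your diagnosis is essentially right, though the precise property that kills the term is constant-magnitude eigenfunctions ($\abs{v_\ell(x)}^2 \equiv 1$, as for the Fourier basis), which makes $k^R(x,x) - \tr\scrT_{G^\perp}$ vanish identically; equal tail eigenvalues alone only reduce it to $\lambda_{p+1}\sup_x\lvert\sum_{\ell>p}(v_\ell^2(x)-1)\rvert$, which is not in general small relative to $\tr\scrT_{G^\perp}$. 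This is a legitimate subtlety you caught that the paper's proof treats as immediate.
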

\Cref{cor:bilevel_asym} is proved in \Cref{app:bilevel_proof}.
Note that we have an asymptotic separation between classification and regression when $1 - r < q < \frac{3}{2}(1 - r)$.
This is comparable to the results of \cite{Muthukumar2021}, which allow slightly larger $q$ and smaller $\beta$ but require much stronger feature assumptions.

We use the bi-level ensemble model in~\eqref{eq:bilevel} for simplicity; however, we can obtain non-asymptotic bounds on the classification risk under more general assumptions.
For a fixed value of $p := n^r$, our analysis allows the tail eigenvalues corresponding to indices $p < \ell \leq d$ to be non-uniform.
The requirement that the top $p$ eigenvalues are the same is somewhat more stringent;
in general, when the eigenvalues are different, the survival operator $\scrSbr$ is not a multiple of the identity.
This could lead to qualitatively different behavior, as now $\etahat$ may be distorted from $\eta^*$ due to differences in the eigenvalues of $\scrT_G$.
This problem disappears in the case that either (a) $\eta^*$ is proportional to a single eigenfunction or (b) the first $p$ eigenvalues of $\scrT$ are identical (both of which hold in \cite{Muthukumar2021}).
We analyze further the extent to which we can bound the classification risk when \emph{neither} of these assumptions holds in \Cref{sec:distortion}.
Our analysis method requires $\lambda_p$ to be close to $\lambda_1$ to obtain significant gains for classification over regression.

\section{NUMERICAL EXPERIMENTS}
\label{sec:numerical}
We now perform numerical experiments to demonstrate how the parameters $\beta, r,$ and $q$ of the bi-level ensemble model affect regression and classification performance. We consider the case of Fourier features $v_{\ell}(x) = e^{\mathbf{j} 2\pi \ell x}$ for $\ell = -d,\ldots,d$ over $x \in [0,1]$ with the uniform sampling measure, and the bi-level ensemble as defined in~\eqref{eq:bilevel}.
% The bi-level ensemble eigenvalues are $$\lambda_{\ell} = \begin{cases}1 & |\ell| \le p \\ n^{-(\beta-r-q)} & p < |\ell| \le d,\end{cases}$$ where $p = \lceil n^r\rceil$ and $d = \lceil n^{\beta} \rceil$, 
The corresponding kernel function is
% \vspace{-7 mm}
\begin{align*}
    k(x,y) &= \sum_{\ell = -d}^{d}\lambda_{\ell}v_{\ell}(x)\overline{v_{\ell}(y)}
    \\
    &= (1-n^{-(\beta-r-q)})D_{p}(x-y) \\
    &\qquad+ n^{-(\beta-r-q)}D_d(x-y), 
\end{align*}
% \vspace{-3 mm}
where $D_m(t) = \tfrac{\sin[(2m+1)\pi t]}{\sin(\pi t)}$ is the Dirichlet sinc function.
We consider three cases for the bi-level ensemble parameters: $(\beta,r,q) = (2.6,0.3,0.3)$, $(2.6,1/3,5/6)$, and $(2.6,0.8,0.45)$.
We sweep over several values of $n$ between $10$ and $3162$. For each $n$, we generate an $\eta^* \in \text{span}\{v_{\ell}\}_{\ell = -p}^{p}$, scaled such that $\displaystyle\max_{x \in [0,1]}|\eta^*(x)| = 1$. 

We first attempt to reconstruct $\eta^*(x)$ from noisy samples $y^{\text{reg}}_i = \eta^*(x_i)+\xi_i$ for $i = 1,\ldots,n$ where $\xi_i$ are i.i.d.\ $\mathcal{N}(0,1)$. We use the kernel ridge regression estimator $\widehat{\eta^*}^{\text{reg}} = \scrA^*(\alpha I_n + \scrA\scrA^*)^{-1}y^{\text{reg}}$ with a regularization parameter of $\alpha = 10^{-3}$. We then measure the relative $L_2$ error of the estimate, i.e., $\scrE^{\text{reg}} = \|\eta^*-\widehat{\eta}^{\text{reg}}\|_{L_2}^2/\|\eta^*\|_{L_2}^2$.

We also attempt to reconstruct $\eta^*(x)$ from binary observations $y^{\text{class}}_i = +1$ with probability $(1+\eta^*(x_i))/2$ and $-1$ with probability $(1-\eta^*(x_i))/2$ for $i = 1,\ldots,n$. We use the estimator $\widehat{\eta}^{\text{class}} = \scrA^*(\alpha I_n + \scrA\scrA^*)^{-1}y^{\text{class}}$ with a regularization parameter of $\alpha = 10^{-3}$. We then measure the relative excess risk, i.e., \[
\scrE^{\text{class}} = \dfrac{\int \abs{\eta^*(x)} \indicator{ \sign(\etahat^{\text{class}}(x)) \neq \sign(\eta^*(x))} \,dx}{\int \abs{\eta^*(x)}\,dx}.
\] The above procedure is repeated over $100$ trials. In Figure~\ref{fig:RegVsClass}, we plot the relative $L_2$ error (averaged over $100$ trials) versus $n$ and the relative excess risk (averaged over $100$ trials) versus $n$ for each of the three sets of values for $\beta, r, q$. In the first case where $\beta = 2.6$, $r = 0.3$, and $q = 0.3$, we have $r+q < 1$ and both $\scrE^{\text{reg}}$ and $\scrE^{\text{class}}$ decrease as $n$ increases. In the second case where $\beta = 2.6$, $r = 1/3$, and $q = 5/6$, we have $1-r < q < \tfrac{3}{2}(1-r)$ and $\beta > 2r+2q$, and $\scrE^{\text{class}}$ decreases as $n$ increases, but $\scrE^{\text{reg}}$ does not decrease as $n$ increases. In the third case where $\beta = 2.6$, $r = 0.8$, and $q = 0.45$, we have that $r+q > 1$, and $q > \tfrac{3}{2}(1-r)$, and both $\scrE^{\text{reg}}$ and $\scrE^{\text{class}}$ do not decrease as $n$ increases.

\begin{figure}
    \centering
    \includegraphics[width = 0.9\columnwidth]{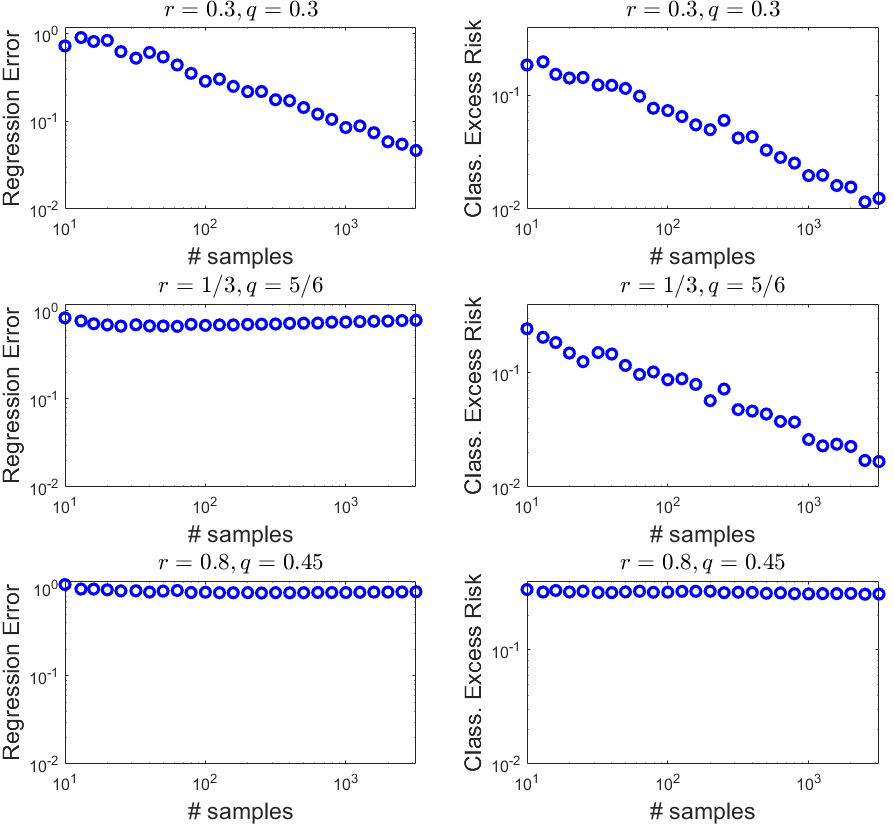} 
    \vspace{-2mm}
    \caption{\small Relative $L_2$ errors versus $n$ and the relative classification excess risks versus $n$ for each of the three sets of bi-level ensemble parameters (averaged over $100$ trials).} \vspace{-5mm}
    \label{fig:RegVsClass}
\end{figure}

% \vspace{-3mm}
\section{DISCUSSION}
% \vspace{-2mm}

In this paper we showed that under minimal assumptions on the data and feature map (a) harmless interpolation of noise in data is possible, and (b) we can be classification-consistent in high-dimensional regimes where we are not regression-consistent.
Important future directions include considering more general function models (e.g., any $f^* \in \scrH$ or even $f^* \not\in \scrH$),
better understanding the implications of distortion among the top eigenfunctions in classification error, and improving the non-asymptotic rates for classification risk from~\Cref{sec:bilevel}.
Another intriguing question is whether there is an equivalence between interpolating binary labels and the max-margin SVM (as shown in~\cite{Muthukumar2021,Hsu2021}) in the more general settings considered in this paper.
Finally, it would be very interesting to study whether our upper bounds (particularly for classification) can be matched by non-asymptotic lower bounds.
% further comparison to the existing classification literature may be useful here.

\subsubsection*{Acknowledgments}
We thank Fanny Yang for useful discussion.
This work was supported, in part, by National Science Foundation grant CCF-2107455.

\printbibliography

%%%%%%%%%%%%%%%%%%%%%%%%%%%%%%%%%%%
%%%%%% SUPPLEMENT (OPTIONAL) %%%%%%
%%%%%%%%%%%%%%%%%%%%%%%%%%%%%%%%%%%

\clearpage
\appendix

\thispagestyle{empty}

% For one-column format, uncomment the following:
\onecolumn \makesupplementtitle
% For two-column format, uncomment the following:
%\twocolumn[ \makesupplementtitle ]

\section{NOTATION}
\begin{table}[b!]
	\centering
	\caption{Notation}
	\label{tab:notation}
	\begin{tabular}{c l m{15em}}
		\toprule
		Symbol(s) & Definition(s) & Description \\ \midrule
		$k_x$ & $k_x = k(\cdot, x)$ & Kernel function centered at $x$ \\
		$\scrT$ & $\scrT(f) = \int f(x) k_x \ d\mu(x)$ & Integral operator of kernel $k$ \\
		$\{(\lambda_\ell, v_\ell)\}_{\ell=1}^\infty$ & $\scrT(f) = \sum_{\ell=1}^\infty \lambda_\ell \ipsub{f}{v_\ell}{L_2} v_\ell,~\lambda_1 \geq \lambda_2 \geq \cdots$ & Eigenvalue decomposition of $\scrT$ \\
		$\scrA$ & $\scrA(f) = \begin{bmatrix*} f(x_1) \\ \vdots \\ f(x_n) \end{bmatrix*}$ & Sampling operator from $\scrH$ to $\R^n$ \\
		$\scrA^*$ & $\scrA^*(z) = \sum_{i=1}^n z_i k_{x_i}$ & Adjoint of $\scrA$ w.r.t\ $\scrH$ and $\ell_2$ inner products \\
		$G$, $G^\perp$ & $G = \spn\{v_1, \dots, v_p\}$ & Span of first $p$ eigenfunctions of $\scrT$ (and its complement) \\
		$\scrI$ ($\scrI_G$) & & Identity operator (restricted to $G$) \\
		$\scrT_G, \scrT_{G^\perp}$ & $\scrT_G = \scrT \scrP_G$, $\scrT_{G^\perp} = \scrT \scrP_{G^\perp}$ & $\scrT$ restricted to $G$ and $G^\perp$ \\
		$\scrA_G$, $\scrR$ & $\scrA_G = \scrA \scrP_G, \scrR = \scrA \scrP_{G^\perp}$ & Restrictions of sampling operator to $G$, $G^\perp$ \\
		$\scrC, \scrC^*$ & $\scrC = \scrA_G$, $\scrC^* = \scrT_G^{-1} \scrA_G^*$ & Sampling operator and its adjoint on $G$ w.r.t.\ $L_2$ inner product on $G$ \\
		$\alpha$ & & Explicit regularization parameter \\
		$\alpha_L, \alpha_U$ & $\alpha_L I_n \preceq \alpha I_n + \scrR \scrR^* \preceq \alpha_U I_n$ & Lower and upper bounds on explicit+implicit regularization \\
		$\alphabr,\alphatl$ & $\alphabr = \frac{2 \alpha_U \alpha_L}{\alpha_U + \alpha_L}, \alphatl = \frac{\alpha_U + \alpha_L}{2}$ & Harmonic and arithmetic means of $\alpha_U, \alpha_L$ \\
		$\scrB$ & $\scrB = (\scrI_G + \scrA_G^* (\alpha + \scrR \scrR^*)^{-1} \scrA_G)^{-1}$ & Bias operator on $G$ \\
		$\scrS$ & $\scrS = \scrI_G - \scrB$ & Kernel regression operator (``survival'') on $G$ \\
		$\scrBbr$ & $\scrBbr = \parens*{\scrI_G + \frac{n}{\alphabr} \scrT_G}^{-1}$ & Idealized approximation to bias $\scrB$ \\
		$\scrSbr$ & $\scrSbr = \scrI_G - \scrBbr = \frac{n}{\alphabr} \scrT_G \parens*{\scrI_G + \frac{n}{\alphabr} \scrT_G}^{-1}$ & Idealized approximation to survival $\scrS$ \\
		\bottomrule
	\end{tabular}
\end{table}
For convenience in reading, we collect all notation that is used for the proofs in Table~\ref{tab:notation}.

In addition, we will use many different norms.
For a function $f \colon X \to \R$, $\norm{f}_{L_p} \coloneqq \parens*{ \parens*{ \E_{x \sim \mu} \abs{f(x)}^p }^{1/p} }$.
For $f \in \scrH$, $\normH{f} = \norm{\scrT^{-1/2} f}_{L_2}$ is the RKHS norm.
For $u \in \R^n$, $\norm{u}_{\ell_2}$ is the standard Euclidean norm.
We denote the $L_2$, $\scrH$, and $\ell_2$ inner products by $\ip{\cdot}{\cdot}_{L_2}$, $\ipH{\cdot}{\cdot}$, and $\ip{\cdot}{\cdot}_{\ell_2}$, respectively.

$\norm{\cdot}_{L_2}$, $\normH{\cdot}$, and $\norm{\cdot}_{\ell_2}$ also denote operator norms when applied to operators from the corresponding Hilbert space to itself.
We will write the operator norm of an operator $T\colon H_1 \to H_2$ (for any Hilbert spaces $H_1$ and $H_2$) with respect to the $H_1$ and $H_2$ norms as $\norm{T}_{H_1 \to H_2}$.
Similarly, $\norm{T}_{HS, H_1 \to H_2}$ refers to the Hilbert-Schmidt norm of $T$ with respect to the $H_1$ and $H_2$ inner products.

\section{DETAILED INTERPOLATION LITERATURE SURVEY}
\label{app:litreview}
Recent work has shown that the ``harmless interpolation" phenomenon becomes more pronounced with increased (effective) overparameterization when the minimum-Hilbert-norm interpolator is used in kernel regression~\citep{Liang2020,Liang2020a} or the minimum-norm interpolator is used in linear regression~\citep{Bartlett2020,Belkin2019a,Hastie2019,Muthukumar2020a,Tsigler2020,Mei2021,Mei2021a,Adlam2020,Ba2020,Dhifallah2020,DAscoli2020,Gerace2020,Hu2020,Li2021,Liao2020,Lin2021} in a variety of models.
See \cite{Belkin2021,Bartlett2021,Dar2021} for recent surveys of this line of work.

All of these models make at least one of the following assumptions: (a) independence of features~\citep{Bartlett2020,Hastie2019,Muthukumar2020a,Chinot2020}, (b) sub-Gaussianity in the feature vector~\citep{Bartlett2020,Tsigler2020}, (c) high data dimension~\citep{Hastie2019,Mei2021,Mei2021a,Liang2020,Liang2020a,Adlam2020,Ba2020,Dhifallah2020,DAscoli2020,Gerace2020,Hu2020,Li2021,Liao2020}, or (d) explicit structure in the kernel operator/feature map~\citep{Belkin2019a,Liang2020,Liang2020a,Muthukumar2020a,Mei2021,Mei2021a,Adlam2020,Ba2020,Dhifallah2020,DAscoli2020,Gerace2020,Hu2020,Li2021,Liao2020,Lin2021}. 
For specific kernels like the Laplace kernel, statistically consistent interpolation may actually \emph{require} growing data dimension with the number of training examples~\citep{Rakhlin2019}, as the data dimension fundamentally alters the eigenvalues of the Laplace kernel integral operator.
In contrast, our results do not explicitly posit any of these assumptions.
Our sufficient conditions for harmless interpolation are expressed purely in terms of the eigenvalues of the kernel integral operator and do not require special structure either on the eigenfunctions or the integral operator itself.

% In contrast, our results do not require any of these assumptions. 
% \vm{I want to change the text a bit here, as it is a bit misleading. In approaches that define an explicit kernel operator (rather than explicitly defining eigenfunctions), the data dimension actually changes the spectrum of the kernel integral operator. This is not quite the same as saying that the analysis is not sharp or limiting.}
% In fact, our sufficient conditions for harmless interpolation are expressed purely in terms of the spectrum of the kernel operator, and do not utilize special structure either on the eigenfunctions or the kernel operator itself.

\section{PROOFS OF DETERMINISTIC-SAMPLE RESULTS}

We begin with the proofs of the deterministic-sample results (Theorems~\ref{thm:reg_bias_deterministic} and~\ref{thm:reg_var_deterministic}).

In this section, we will often abbreviate scaled identity operators such as $a I_n$, $a \scrI$, $a \scrI_G$, $a \scrI_G{^\perp}$ by the number $a$. The meaning should be clear from context.
\label{app:deterministic_proofs}
\subsection{Bias}
\label{app:bias_proof}
The main technical challenge for proving \Cref{thm:reg_bias_deterministic} is bounding the approximation error between the ``ideal'' bias operator $\scrBbr = \parens*{\scrI_G + \frac{n}{\alphabr} \scrT_G}^{-1}$ (discussed in \Cref{sec:proof_sketch})
and the actual bias, which turns out to be $\scrB \coloneqq (\scrI_G + \scrA_G^* (\alpha + \scrR \scrR^*)^{-1} \scrA_G)^{-1}$ (derived in the proof of \Cref{thm:reg_bias_deterministic} below).
The following result quantifies this error.
\begin{lemma}
	\label{lem:bias_approx_error}
	Under the conditions of \Cref{thm:reg_bias_deterministic},
	\[
		\normsub{\scrB - \scrBbr}{\scrH \to L_2} \leq \frac{c}{1-c} \normsub{\scrBbr}{\scrH \to L_2},
	\]
	where $c < 1$ is an upper bound on the quantity $\frac{\alpha_U - \alpha_L}{\alpha_U + \alpha_L} + \frac{2}{n} \norm*{ \scrC^*\scrC - n \scrI_G}_{L_2}$ (as defined in~\Cref{thm:reg_bias_deterministic}).
\end{lemma}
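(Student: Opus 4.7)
The plan is to reduce the lemma to the single operator-norm bound $\norm{\scrBbr \Delta}_{L_2 \to L_2} \le c$, where
\[
    \Delta := \scrB^{-1} - \scrBbr^{-1} = \scrA_G^*(\alpha I_n + \scrR \scrR^*)^{-1} \scrA_G - \tfrac{n}{\alphabr} \scrT_G.
\]
From $\scrB^{-1} = \scrBbr^{-1} + \Delta$, left-multiplying by $\scrBbr$ and then inverting gives $\scrB = (\scrI_G + \scrBbr \Delta)^{-1} \scrBbr$, so
\[
    \scrB - \scrBbr = -(\scrI_G + \scrBbr \Delta)^{-1} \scrBbr \Delta \scrBbr.
\]
Once $\norm{\scrBbr \Delta}_{L_2 \to L_2} \le c < 1$ is in hand, a Neumann series bound $\norm{(\scrI_G + \scrBbr \Delta)^{-1}}_{L_2 \to L_2} \le 1/(1-c)$, combined with submultiplicativity along the chain $\scrH \to L_2 \to L_2 \to L_2$, yields
\[
    \norm{\scrB - \scrBbr}_{\scrH \to L_2} \le \frac{c}{1-c}\, \normsub{\scrBbr}{\scrH \to L_2},
\]
which is the claim.

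To set up the reduction I would first rewrite $\scrBbr \Delta$ so that both hypotheses can be applied in a single step. Using the identity $\scrA_G^* = \scrT_G \scrC^*$ from Table~\ref{tab:notation}, the operator $\scrA_G^* F \scrA_G$ with $F := (\alpha I_n + \scrR \scrR^*)^{-1}$ factors as $\scrT_G \scrC^* F \scrC$, so
\[
    \Delta = \scrT_G \bigl( \scrC^* F \scrC - \tfrac{n}{\alphabr} \scrI_G \bigr).
\]
Since $\scrBbr$ commutes with $\scrT_G$, a direct calculation gives $\scrBbr \scrT_G = (\alphabr/n) \scrSbr$, and introducing the rescaled matrix $F' := \alphabr F$ collapses the product to
\[
    \scrBbr \Delta = \scrSbr \bigl( \tfrac{1}{n} \scrC^* F' \scrC - \scrI_G \bigr).
\]
Since $\scrSbr$ is self-adjoint on $L_2 \cap G$ with spectrum in $[0,1)$, we have $\norm{\scrSbr}_{L_2 \to L_2} \le 1$, and it suffices to bound the inner factor $\tfrac{1}{n} \scrC^* F' \scrC - \scrI_G$ in the $L_2 \to L_2$ operator norm.

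For the inner factor I would combine the hypotheses multiplicatively. The regularization hypothesis gives $(1 - \rho) I_n \preceq F' \preceq (1 + \rho) I_n$ with $\rho := (\alpha_U - \alpha_L)/(\alpha_U + \alpha_L)$, and the sampling hypothesis gives $(1 - \epsilon) n \scrI_G \preceq \scrC^* \scrC \preceq (1 + \epsilon) n \scrI_G$ with $\epsilon := \tfrac{1}{n} \norm{\scrC^* \scrC - n \scrI_G}_{L_2}$ and $\rho + 2 \epsilon \le c$ by assumption. Plugging these into the sandwich $\ip{\scrC^* F' \scrC f}{f}_{L_2} = \ip{F' \scrC f}{\scrC f}_{\ell_2}$ produces
\[
    (1-\rho)(1-\epsilon) n \scrI_G \preceq \scrC^* F' \scrC \preceq (1+\rho)(1+\epsilon) n \scrI_G,
\]
and hence $\norm{\tfrac{1}{n} \scrC^* F' \scrC - \scrI_G}_{L_2} \le \rho + \epsilon + \rho\epsilon \le \rho + 2\epsilon \le c$, where the middle step uses $\rho \le 1$.

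The hard part will be this final sandwich: it is precisely the cross term $\rho\epsilon$ arising from the interaction of the two perturbations that forces the factor $2$ in the hypothesis. The multiplicative bound $(1 \pm \rho)(1 \pm \epsilon)n\scrI_G$ absorbs this cross term into $\epsilon$ exactly when $\rho \le 1$, which is automatic from $\alpha_L > 0$. Everything else—the algebraic identity for $\scrB - \scrBbr$, the Neumann series, and the final submultiplicative step—is a routine manipulation of operator inverses and PSD inequalities, and the careful bookkeeping between the $\scrH$, $L_2$, and $\ell_2$ inner products is handled by the relation $\scrA_G^* = \scrT_G \scrC^*$ used above.
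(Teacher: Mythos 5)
Your proof is correct, and it is essentially the same argument as the paper's, with a cleaner presentation in one step. Both proofs identify the perturbation $\Delta = \scrB^{-1} - \scrBbr^{-1}$, rewrite $\scrBbr\Delta$ by pushing $\scrT_G$ through to get a conjugation by $\scrC$ and the prefactor $\scrBbr\scrT_G = \tfrac{\alphabr}{n}\scrSbr = (\scrT_G^{-1} + \tfrac{n}{\alphabr}\scrI_G)^{-1}$, bound the result by $c$, and then sum the resulting geometric/Neumann series to obtain the $c/(1-c)$ factor. (Your resolvent identity $\scrB - \scrBbr = -(\scrI_G + \scrBbr\Delta)^{-1}\scrBbr\Delta\scrBbr$ is just a repackaging of the paper's explicit power series.) The one place where the two arguments diverge cosmetically is the bound $\norm{\scrBbr\Delta}_{L_2} \le c$: you sandwich $F' = \alphabr(\alpha I_n + \scrR\scrR^*)^{-1}$ and $\scrC^*\scrC$ by PSD inequalities and multiply the two-sided bounds, extracting the norm from the spectral interval $[(1-\rho)(1-\epsilon), (1+\rho)(1+\epsilon)]$; the paper instead splits $\scrC^*F\scrC - \tfrac{n}{\alphabr}\scrI_G$ by the triangle inequality and bounds $\norm{\scrC}^2 \le n + \norm{\scrC^*\scrC - n\scrI_G}$. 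Either way the cross term $\rho\epsilon$ appears and is absorbed via $\rho < 1$, which is exactly where the factor of $2$ in the hypothesis is used — you correctly identify this as the one delicate point. The PSD-sandwich route is a touch slicker and avoids bounding $\norm{\scrC}_{L_2\to\ell_2}^2$ as a separate step, but it is not a genuinely different approach.
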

\begin{proof}
	Recall that we have assumed
	\[
		\frac{\alpha_U - \alpha_L}{\alpha_U + \alpha_L} + \frac{2}{n} \norm*{ \scrC^*\scrC - n \scrI_G}_{L_2} \leq c < 1.
	\]
	A standard perturbation argument (e.g., \cite[p.\ 335]{Horn1985}) gives
	\begin{align*}
		\scrB - \scrBbr
		&= \sum_{i = 1}^\infty (-1)^{i} \brackets*{ \scrBbr \parens*{ \scrA_G^* (\alpha + \scrR \scrR^*)^{-1} \scrA_G - \frac{n}{\alphabr} \scrT_G } }^k \scrBbr \\
		&= \parens*{\sum_{i=1}^\infty (-1)^{i} \brackets*{ \parens*{ \scrT_G^{-1} + \frac{n}{\alphabr} \scrI_G }^{-1} \parens*{ \scrC^* (\alpha + \scrR \scrR^*)^{-1} \scrC - \frac{n}{\alphabr} \scrI_G } }^k } \scrBbr
	\end{align*}
	as long as the operator norm (in any space) of the bracketed operator $\parens*{ \scrT_G^{-1} + \frac{n}{\alphabr} \scrI_G }^{-1} \parens*{ \scrC^* (\alpha + \scrR \scrR^*)^{-1} \scrC - \frac{n}{\alphabr} \scrI_G }$ is strictly less than $1$.
	
	We now show that this is the case.
	We have
	\begin{align*}
		\norm*{\scrC^* (\alpha + \scrR \scrR^*)^{-1} \scrC - \frac{n}{\alphabr} \scrI_G}_{L_2}
		&\leq \norm*{ \scrC^* \parens*{ (\alpha + \scrR\scrR^*)^{-1} - \frac{1}{\alphabr} } \scrC }_{L_2} + \frac{1}{\alphabr} \norm*{ \scrC^*\scrC - n \scrI_G}_{L_2} \\
		&\leq \norm{\scrC}_{L_2 \to \ell_2}^2 \max\braces*{ \abs*{\frac{1}{\alpha_L} - \frac{1}{\alphabr}}, \abs*{\frac{1}{\alpha_U} - \frac{1}{\alphabr}} } + \frac{1}{\alphabr} \norm*{ \scrC^*\scrC - n \scrI_G}_{L_2} \\
		&\leq \frac{\alpha_U - \alpha_L}{2 \alpha_U \alpha_L} \parens*{ n + \norm*{ \scrC^*\scrC - n \scrI_G}_{L_2} } + \frac{1}{\alphabr} \norm*{ \scrC^*\scrC - n \scrI_G}_{L_2},
	\end{align*}
	where the first and third inequalities use the triangle inequality, and the second inequality uses $\alpha_L \preceq \alpha + \scrR \scrR^* \preceq \alpha_U$.
	Then, since
	\[
		\norm*{ \parens*{\scrT_G^{-1} + \frac{n}{\alphabr} \scrI_G}^{-1} }_{L_2} \leq \frac{\alphabr}{n},
	\]
	we have
	\begin{align*}
		&\norm*{ \parens*{ \scrT_G^{-1} + \frac{n}{\alphabr} \scrI_G }^{-1} \parens*{ \scrC^* (\alpha + \scrR \scrR^*)^{-1} \scrC - \frac{n}{\alphabr} \scrI_G } }_{L_2} \\
		&\qquad\leq \frac{\alpha_U - \alpha_L}{\alpha_U + \alpha_L} + \parens*{ 1 + \frac{\alpha_U - \alpha_L}{\alpha_U + \alpha_L} } \cdot \frac{1}{n} \norm*{ \scrC^*\scrC - n \scrI_G}_{L_2} \\
		&\qquad\leq \frac{\alpha_U - \alpha_L}{\alpha_U + \alpha_L} + \frac{2}{n} \norm*{ \scrC^*\scrC - n \scrI_G}_{L_2} \\
		&\qquad\leq c.
	\end{align*}
	Since $c < 1$, the rest of the bound follows via the expression for the infinite sum of a geometric series.
\end{proof}

We are now ready to prove our main deterministic bias result (\Cref{thm:reg_bias_deterministic}).
\begin{proof}[Proof of \Cref{thm:reg_bias_deterministic}]
	Since $f^* \in G$,
	the full expression for the noiseless regression estimate is
	\[
		\fhat_0 = \begin{bmatrix*} \scrA_G^* \\ \scrR^* \end{bmatrix*} (\alpha + \scrA_G \scrA_G^* + \scrR \scrR^*)^{-1} \scrA_G f^*.
	\]
	The pushthrough identity gives
	\begin{align*}
		(\alpha + \scrA_G \scrA_G^* + \scrR \scrR^*)^{-1} \scrA_G
		&= (\alpha + \scrR \scrR^*)^{-1} (I_n + \scrA_G \scrA_G^* (\alpha + \scrR \scrR^*)^{-1})^{-1} \scrA_G \\
		&= (\alpha + \scrR \scrR^*)^{-1} \scrA_G (\scrI_G + \scrA_G^* (\alpha + \scrR \scrR^*)^{-1} \scrA_G)^{-1}.
	\end{align*}
	This gives
	\begin{align*}
		\scrP_G(\fhat_0)
		&= \scrA_G^* (\alpha + \scrR \scrR^*)^{-1} \scrA_G (\scrI_G + \scrA_G^* (\alpha + \scrR \scrR^*)^{-1} \scrA_G)^{-1} f^* \\
		&= (\scrI_G - (\scrI_G + \scrA_G^* (\alpha + \scrR \scrR^*)^{-1} \scrA_G)^{-1}) f^*
	\end{align*}
	and
	\begin{align*}
		\scrP_{G^\perp}(\fhat_0)
		&= \scrR^* (\alpha + \scrR \scrR^*)^{-1} \scrA_G (\scrI_G + \scrA^* (\alpha + \scrR \scrR^*)^{-1} \scrA^*)^{-1} f^*.
	\end{align*}
	We denote the actual bias and survival operators on $G$ as
	 \begin{align*}
	     \scrB &= (\scrI_G + \scrA_G^* (\alpha + \scrR \scrR^*)^{-1} \scrA_G)^{-1}, \text{ and } \\
	     \scrS &= \scrI_G - \scrB.
	 \end{align*}
% 		\scrB = (\scrI_G + \scrA_G^* (\alpha + \scrR \scrR^*)^{-1} \scrA_G)^{-1}
% 	\]
% 	and
% 	\[
% 		\scrS = \scrI_G - \scrB.
% 	\]
	We then have
	\[
		\scrP_G(\fhat_0) = \scrS f^*,
	\]
	and
	\[
		\scrP_{G^\perp}(\fhat_0) = \scrR^* (\alpha + \scrR \scrR^*)^{-1} \scrA_G \scrB f^*.
	\]
	
	Clearly, $\norm{f^* - \scrP_G (\fhat_0)}_{L_2} = \norm{\scrB f^*}_{L_2} \leq \normsub{\scrB}{\scrH \to L_2} \normH{f^*}$.
	To bound $\norm{\scrP_{G^\perp}(\fhat_0)}_{L_2}$, note that (recalling $\scrC = \scrA_G$)
	\[
		\normsub*{\scrR^* (\alpha + \scrR \scrR^*)^{-1} \scrA_G}{L_2}
		\leq \normsub{\scrI_{G^\perp}}{\scrH \to L_2} \cdot \normsub{ \scrR^* (\alpha + \scrR \scrR^*)^{-1} }{\ell_2 \to \scrH} \cdot \normsub{\scrC}{L_2 \to \ell_2}.
	\]
	Note that $\norm{\scrI_{G^\perp}}_{\scrH \to L_2} = \normH{\scrT_{G^\perp}^{1/2}} = \sqrt{\lambda_{p+1}}$,
	and
	\[
		\norm{\scrC}_{L_2 \to \ell_2}^2 = \norm{\scrC^* \scrC}_{L_2} \approx \norm{n \scrI_G}_{L_2} = n.
	\]
	Furthermore, note that the singular values (from $\ell_2$ to $\scrH$) of the operator $\scrR^* (\alpha + \scrR \scrR^*)^{-1}$
	are 
	\[
		\frac{\sqrt{\lambda_k(\scrR \scrR^*)}}{\alpha + \lambda_k(\scrR \scrR^*)} \leq \frac{1}{\sqrt{\alpha + \lambda_k(\scrR \scrR^*)}} \leq \frac{1}{\sqrt{\alpha_L}},~k = 1,\dots, n,
	\]
	where $\lambda_k(S)$ denotes the $k$th eigenvalue of a symmetric matrix $S$.
	Therefore,
	\[
		\normsub*{\scrR^* (\alpha + \scrR \scrR^*)^{-1} \scrA_G}{L_2}
		\lesssim \sqrt{\lambda_{p+1}} \cdot \frac{1}{\sqrt{\alpha_L}} \cdot \sqrt{n} = \sqrt{\frac{n \lambda_{p+1}}{\alpha_L}}.
	\]
	
% 	Therefore,
    Noting that $\alphabr \leq 2 \alpha_L$, we have
	\[
		\norm{\fhat_0 - f^*}_{L_2} \lesssim \parens*{ 1 + \sqrt{\frac{n \lambda_{p+1}}{\alphabr}} } \normsub{\scrB}{\scrH \to L_2} \normH{f^*}.
	\]
	
	\Cref{lem:bias_approx_error} gives
	\[
		\normsub{\scrB}{\scrH \to L_2} \leq \normsub{\scrBbr}{\scrH \to L_2} + \normsub{\scrB - \scrBbr}{\scrH \to L_2}
		\leq \frac{1}{1-c} \normsub{\scrBbr}{\scrH \to L_2}.
	\]
	Also, one can also easily check that $\normH{B} \leq 1$,
	and therefore $\normsub{\scrB}{\scrH \to L_2} \leq \sqrt{\lambda_1}$.
	Thus
	\[
		\normsub{\scrB}{\scrH \to L_2} \leq \min\braces*{ \sqrt{\lambda_1}, \frac{1}{1-c} \normsub{\scrBbr}{\scrH \to L_2} }
	\]
	Using the fact that $\normsub{\scrBbr}{\scrH \to L_2} \lesssim \min\braces*{ \sqrt{\frac{\alphabr}{n}}, \frac{\alphabr}{n \sqrt{\lambda_p} }}$ completes the proof.
	
\end{proof}
With the proof of \Cref{thm:reg_bias_deterministic} complete, recall that we introduced a more refined expression for the estimation error due to bias in~\Cref{lem:approx_error} for the purpose of bounding classification error.
Note that the proof of \Cref{lem:approx_error} is a very simple modification of the preceding proof.
The error in $G^\perp$ is bounded the same way.
For the error in $G$, we bound the norm of $(\scrS - \scrSbr) f^* = (\scrBbr - \scrB) f^*$ instead of $f^* - \scrS f^* = \scrB f^*$,
and therefore we replace $\normsub{\scrB}{\scrH \to L_2}$ by $\normsub{\scrBbr - \scrB}{\scrH \to L_2}$ in the bound.

\subsection{Variance}
\label{app:variance_proof}
Recall that $\alpha_L \preceq \alpha + \scrR \scrR^* \preceq \alpha_U$ and $\alphatl = \frac{\alpha_U + \alpha_L}{2}$.
Also recall the formula
\[
	\epsilon = \scrA^* (\alpha + \scrA \scrA^*)^{-1} \xi.
\]
To allow us to replace $\alpha + \scrA \scrA^*$ with $\alphatl + \scrA_G \scrA_G^*$, we need the following result:
\begin{lemma}
	\label{lem:Gram_error}
	\[
		\norm{(\alphatl + \scrA_G \scrA_G^*) (\alpha + \scrA \scrA^*)^{-1}}_{\ell_2} \leq \frac{1}{2} \parens*{ \frac{\alpha_U}{\alpha_L} + 1 }.
	\]
\end{lemma}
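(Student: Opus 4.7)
The plan is to exploit the fact that $\alpha_L I_n \preceq \alpha I_n + \scrR\scrR^* \preceq \alpha_U I_n$ and that $\alphatl$ is precisely the midpoint of $[\alpha_L, \alpha_U]$, so that the operator $\alphatl I_n - (\alpha I_n + \scrR\scrR^*)$ has small norm. Writing $A = \scrA_G \scrA_G^*$ and $B = \alpha I_n + \scrR\scrR^*$, the quantity of interest is $\norm{(\alphatl I_n + A)(A + B)^{-1}}_{\ell_2}$, and the bound reduces to controlling how much $B$ deviates from $\alphatl I_n$.

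The first step is the algebraic decomposition
\begin{equation*}
\alphatl I_n + A \;=\; (A + B) \;+\; (\alphatl I_n - B),
\end{equation*}
which after right-multiplying by $(A+B)^{-1}$ gives
\begin{equation*}
(\alphatl I_n + A)(A + B)^{-1} \;=\; I_n \;+\; (\alphatl I_n - B)(A + B)^{-1}.
\end{equation*}
Then I would apply the triangle inequality and submultiplicativity of the operator norm to obtain
\begin{equation*}
\norm{(\alphatl I_n + A)(A+B)^{-1}}_{\ell_2} \leq 1 + \norm{\alphatl I_n - B}_{\ell_2} \cdot \norm{(A+B)^{-1}}_{\ell_2}.
\end{equation*}

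The two remaining factors are now easy. Since $B$ is symmetric with eigenvalues in $[\alpha_L, \alpha_U]$ and $\alphatl$ is the arithmetic mean of the endpoints, the eigenvalues of $\alphatl I_n - B$ lie in $[-\tfrac{\alpha_U - \alpha_L}{2}, \tfrac{\alpha_U - \alpha_L}{2}]$, giving $\norm{\alphatl I_n - B}_{\ell_2} \leq \tfrac{\alpha_U - \alpha_L}{2}$. Because $A \succeq 0$ and $B \succeq \alpha_L I_n$, we also have $A + B \succeq \alpha_L I_n$, hence $\norm{(A+B)^{-1}}_{\ell_2} \leq 1/\alpha_L$. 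Combining these gives
\begin{equation*}
\norm{(\alphatl I_n + A)(A+B)^{-1}}_{\ell_2} \leq 1 + \frac{\alpha_U - \alpha_L}{2 \alpha_L} = \frac{\alpha_U + \alpha_L}{2 \alpha_L} = \frac{1}{2}\parens*{\frac{\alpha_U}{\alpha_L} + 1},
\end{equation*}
which is the claim.

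There is essentially no obstacle here — $A$ and $B$ need not commute, but this is irrelevant because all bounds go through submultiplicativity of the spectral norm rather than any simultaneous diagonalization. The only ``trick'' is recognizing that the harmonic mean $\alphabr$ in the bias theorem and the arithmetic mean $\alphatl$ here are chosen precisely so that the relevant perturbation $\alphatl I_n - B$ is centered at zero, minimizing its spectral norm. The final factor $\tfrac{1}{2}(\alpha_U/\alpha_L + 1)$ matches exactly because $1 + \tfrac{\alpha_U - \alpha_L}{2\alpha_L} = \tfrac{\alpha_U + \alpha_L}{2\alpha_L}$.
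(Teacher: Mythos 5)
Your proof is correct, and it takes a genuinely different and more elementary route than the paper's. The paper writes the difference of resolvents $(\alphatl + \scrA_G\scrA_G^*)^{-1} - (\alpha + \scrA\scrA^*)^{-1}$ as a Neumann series in the perturbation $(\alpha + \scrR\scrR^* - \alphatl)(\alphatl + \scrA_G\scrA_G^*)^{-1}$, then bounds that perturbation's operator norm by the ratio $(\alpha_U - \alpha_L)/(\alpha_U + \alpha_L) < 1$ (using $\norm{(\alphatl + \scrA_G\scrA_G^*)^{-1}}_{\ell_2} \leq 1/\alphatl$), and finally sums the geometric series. You instead use the one-line algebraic identity $(\alphatl I_n + A)(A+B)^{-1} = I_n + (\alphatl I_n - B)(A+B)^{-1}$ and apply triangle inequality plus submultiplicativity directly, bounding $\norm{(A+B)^{-1}}_{\ell_2} \leq 1/\alpha_L$ rather than $1/\alphatl$. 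Your approach is shorter, requires no convergence check, and avoids the infinite series entirely; it arrives at the identical constant because $1/(1-r) = 1 + (\alpha_U-\alpha_L)/(2\alpha_L)$ for $r = (\alpha_U-\alpha_L)/(\alpha_U+\alpha_L)$. Both are valid, but yours is the cleaner argument for this particular lemma.
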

\begin{proof}
	Since $(\alpha + \scrA \scrA^*) - (\alphatl + \scrA_G \scrA_G^*) = \alpha + \scrR \scrR^* - \alphatl$,
    another perturbation expansion (see \Cref{app:bias_proof}) gives
	\begin{align*}
		(\alphatl + \scrA_G \scrA_G^*)^{-1} - (\alpha + \scrA \scrA^*)^{-1}
		&= \sum_{k=1}^\infty (-1)^{k+1} (\alphatl + \scrA_G \scrA_G^*)^{-1} \brackets*{ (\alpha + \scrR \scrR^* - \alphatl) (\alphatl + \scrA_G \scrA_G^*)^{-1} }^k,
	\end{align*}
	which is valid since $\alpha_L \preceq \alpha + \scrR \scrR^* \preceq \alpha_U$ implies
	\[
		\norm*{(\alpha + \scrR \scrR^* - \alphatl) (\alphatl + \scrA_G \scrA_G^*)^{-1}}_{\ell_2}
		\leq \frac{1}{\alphatl} \cdot \frac{\alpha_U - \alpha_L}{2}
		= \frac{\alpha_U - \alpha_L}{\alpha_U + \alpha_L} < 1.
	\]
	Then
	\begin{align*}
		I_n - (\alphatl + \scrA_G \scrA_G^*) (\alpha + \scrA \scrA^*)^{-1}
		&= \sum_{k=1}^\infty (-1)^{k+1} \brackets*{ (\alpha + \scrR \scrR^* - \alphatl) (\alphatl + \scrA_G \scrA_G^*)^{-1} }^k.
	\end{align*}
	
	We apply the triangle inequality to get
	\begin{align*}
		\normsub{(\alphatl + \scrA_G \scrA_G^*) (\alpha + \scrA \scrA^*)^{-1}}{\ell_2}
		&\leq \normsub{I_n}{\ell_2} + \sum_{k=1}^\infty \norm*{ (\alpha + \scrR \scrR^* - \alphatl) (\alphatl + \scrA_G \scrA_G^*)^{-1} }_{\ell_2}^k \\
		&\leq 1 + \sum_{i=1}^\infty \parens*{ \frac{\alpha_U - \alpha_L}{\alpha_U + \alpha_L}}^i \\
		&= \frac{1}{2}\parens*{ \frac{\alpha_U}{\alpha_L} + 1 }.
	\end{align*}
	
\end{proof}

We can now prove the main ``variance'' error bound:
\begin{proof}[Proof of \Cref{thm:reg_var_deterministic}]
	Since $\var(\xi_i) \leq \sigma^2$ for each $i$, we have
	\begin{align*}
		\E_\xi \norm{\epsilon}_{L_2}^2
		&\leq \sigma^2 \norm{\scrA^* (\alpha + \scrA \scrA^*)^{-1}}_{HS, \ell_2 \to L_2}^2 \\
		&= \sigma^2 \norm{\scrA^* (\alphatl + \scrA_G \scrA_G^*)^{-1} (\alphatl + \scrA_G \scrA_G^*) (\alpha + \scrA \scrA^*)^{-1}}_{HS, \ell_2 \to L_2}^2 \\
		&\leq \frac{\sigma^2}{4} \parens*{\frac{\alpha_U}{\alpha_L} + 1}^2  \norm{\scrA^* (\alphatl + \scrA_G \scrA_G^*)^{-1}}_{HS, \ell_2 \to L_2}^2,
	\end{align*}
	where the last inequality substitutes~\Cref{lem:Gram_error}.
	Furthermore, we have
	\begin{align*}
		\norm{\scrA^* (\alphatl + \scrA_G \scrA_G^*)^{-1}}_{HS, \ell_2 \to L_2}^2
		&= \norm{\scrA_G^* (\alphatl + \scrA_G \scrA_G^*)^{-1}}_{HS, \ell_2 \to L_2}^2 + \norm{\scrR^* (\alphatl + \scrA_G \scrA_G^*)^{-1}}_{HS, \ell_2 \to L_2}^2 \\
		&\leq \norm{(\alphatl + \scrA_G^* \scrA_G)^{-1} \scrA_G^* }_{HS, \ell_2 \to L_2}^2 + \frac{\tr_{L_2}(\scrR^* \scrR)}{\alphatl^2} \\
		&= \norm{(\alphatl \scrT_G^{-1}  + \scrC^* \scrC)^{-1} \scrC^* }_{HS, \ell_2 \to L_2}^2 + \frac{\tr_{L_2}(\scrR^* \scrR)}{\alphatl^2} \\
		&\lesssim \frac{p}{n} + \frac{\tr_{L_2}(\scrR^* \scrR)}{\alphatl^2},
	\end{align*}
	where the last inequality is due to the fact that $\scrC$ is an $n \times p$-dimensional operator, all of whose singular values are close to $\sqrt{n}$.
	
	Therefore,
	\[
		\E_\xi \norm{\epsilon}_{L_2}^2 \lesssim \sigma^2 \parens*{\frac{\alpha_U}{\alpha_L} + 1}^2 \parens*{ \frac{p}{n} + \frac{\tr_{L_2}(\scrR^* \scrR)}{\alphatl^2} }.
	\]
\end{proof}

\subsubsection{High-probability Noise Bounds}
If the $\xi_i$'s are sub-Gaussian,
we could use the Hanson-Wright inequality for sub-Gaussian random vectors (see, e.g., \cite{Rudelson2013})
to get a high-probability bound in \Cref{thm:reg_var_deterministic},

Note that we can write
\[
	\norm{\epsilon}_{L_2}^2 = \ip*{ Z \xi }{\xi}_{\ell_2},
\]
where
\[
Z = (\alpha + \scrA \scrA^*)^{-1}\scrA \scrT \scrA^* (\alpha + \scrA \scrA^*)^{-1}.
\]
We have already calculated an upper bound on the expectation of this quadratic form.
To use the Hanson-Wright inequality to bound the upper tail, we need to bound both $\norm{Z}_{\ell_2}$ and $\normHS{Z}$
(where $\normHS{Z}$ is the Hilbert-Schmidt norm with respect to the Euclidean inner product, also known as the Frobenius norm).
By a similar argument as before, we have
\[
\norm{Z}_{\ell_2} \leq \frac{1}{4} \parens*{ \frac{\alpha_U}{\alpha_L} + 1 }^2 \norm {\Ztl}_{\ell_2},
\]
and
\[
\normHS{Z} \leq \frac{1}{4} \parens*{ \frac{\alpha_U}{\alpha_L} + 1 }^2 \normHS{\Ztl},
\]
where
\begin{align*}
	\Ztl &= (\alphatl + \scrA_G \scrA_G^*)^{-1} \scrA \scrT \scrA^* (\alphatl + \scrA_G \scrA_G^*)^{-1} \\
	&= \underbrace{(\alphatl + \scrA_G \scrA_G^*)^{-1} \scrA_G \scrT_G \scrA_G^* (\alphatl + \scrA_G \scrA_G^*)^{-1}}_{\Ztl_G} + \underbrace{(\alphatl + \scrA_G \scrA_G^*)^{-1} \scrR \scrT_{G^\perp} \scrR^* (\alphatl + \scrA_G \scrA_G^*)^{-1}}_{\Ztl_{G^\perp}}.
\end{align*}
Note that
\[
\Ztl_G  = \scrA_G (\alphatl + \scrA_G^* \scrA_G)^{-1} \scrT_G (\alphatl + \scrA_G^* \scrA_G)^{-1} \scrA_G^*
= \scrC (\alphatl \scrT_G^{-1} + \scrC^* \scrC)^{-2} \scrC^*.
\]
By a similar argument as before (in which we were effectively calculating the trace of $\Ztl_G$),
we have $\norm{\Ztl_G}_{\ell_2} \lesssim \frac{1}{n}$ and $\normHS{\Ztl_G} \lesssim \frac{\sqrt{p}}{n}$.

Similarly, $\norm{\Ztl_{G^\perp}}_{\ell_2} \leq \frac{1}{\alphatl^2} \norm{\scrR \scrT_{G^\perp} \scrR^*}_{\ell_2}$, and $\normHS{\Ztl_{G^\perp}} \leq \frac{1}{\alphatl^2} \normHS{\scrR \scrT_{G^\perp} \scrR^*}$.

\section{PROOFS OF OPERATOR CONCENTRATION RESULTS}
\label{app:conc_proofs}
\begin{proof}[Proof of \Cref{lem:R_conc_generic}]
	Let $\diag(Z)$ denote the projection of $Z$ onto the space of diagonal matrices,
	and let $\diag^\perp(Z)$ denote the orthogonal projection (i.e., onto the space of matrices with zero diagonal).
	Note that
	\begin{align*}
		\norm{\scrR \scrR^* - (\tr \scrT_{G^\perp}) I_n}
		&\leq \norm{\diag^\perp(\scrR \scrR^*)} + \norm{\diag(\scrR \scrR^*) - (\tr \scrT_{G^\perp}) I_n} \\
		&\leq \normHS{\diag^\perp(\scrR \scrR^*)} + \max_i~\abs{k^R(x_i, x_i) - \tr \scrT_{G^\perp}} \\
		&\leq \sqrt{\sum_{i \neq j} (k^R(x_i, x_j))^2} + \norm{k^R(\cdot, \cdot) - \tr \scrT_{G^\perp}}_{\infty}. \\
	\end{align*}
	Squaring, taking an expectation, and noting that
	\[
		\E_{x,y \simiid \mu} (k^R(x, y))^2 = \tr (\scrT_{G^\perp}^2)
	\]
	completes the proof.
\end{proof}

\begin{proof}[Proof of \Cref{lem:R_trace_exp}]
	We have
	\begin{align*}
		\tr_{L_2}(\scrR^* \scrR) 
		&= \sum_{i=1}^n \tr_{L_2}(k^R_{x_i} \otimes k^R_{x_i}) \\
		&= \sum_{i=1}^n \norm{k^R_{x_i}}_{L_2}^2 \\
		&= \sum_{i=1}^n \sum_{\ell > p} \lambda_\ell^2 v_\ell^2(x_i).
	\end{align*}
	Taking an expectation completes the proof.
\end{proof}

\begin{proof}[Proof of \Cref{lem:C_conc_BOS}]
    We can write the operator $\scrC^* \scrC$ as a sum of independent random operators:
    \[
        \scrC^* \scrC = \sum_{i=1}^n z(x_i) \otimes z(x_i),
    \]
    where
    \[
        z(x) \coloneqq \sum_{\ell = 1}^p v_\ell(x) v_\ell.
    \]
    Note that the BOS condition implies $\norm{z(x)}_{L_2}^2 \leq Cp$ almost surely in $x$.
    We also have $\E z(x) \otimes z(x) = \scrI_G$ for $x \sim \mu$.
    
    We use a matrix Bernstein inequality \parencite[Theorem 6.6.1]{Tropp2015}
    to analyze the zero-mean sum
    \[
        \scrC^* \scrC - n \scrI_G = \sum_{i=1}^p (z(x_i) \otimes z(x_i) - \E z(x_i) \otimes z(x_i)).
    \]
    Writing $X_i = z(x_i) \otimes z(x_i) - \E z(x_i) \otimes z(x_i)$,
    we have $\norm{X_i}_{L_2} \leq Cp$ almost surely,
    and
    \[
        \E X_i^2
        \preceq \E (z(x_i) \otimes z(x_i))^2
        = \E \norm{z(x_i)}_{L_2}^2 z(x_i) \otimes z(x_i)
        \preceq C p \E z(x_i) \otimes z(x_i)
        = Cp \scrI_G.
    \]
    The Bernstein inequality then gives that for any $t > 0$,
    with probability at least $1 - e^{-t}$,
    \[
        \norm{\scrC^* \scrC - n \scrI_G}_{L_2}
        = \norm*{\sum_{i=1}^n X_i}_{L_2}
        \lesssim \sqrt{C p n (t + \log p)} + Cp (t + \log p).
    \]
\end{proof}

% \begin{proof}[Proof of \Cref{lem:R_conc_upper_BOS}]
% 	Note that $\norm{\scrR \scrR^*} = \normH{\scrR^* \scrR}$.
% 	Therefore, we will bound the latter quantity.
% 	Since $\E \scrR^* \scrR = n \scrT_{G^\perp}$, which has operator norm $n \lambda_{p+1}$,
% 	we have
% 	\[
% 		\normH{\scrR^* \scrR}
% 		\leq n \lambda_{p+1} + \normH{\scrR^* \scrR - \E \scrR^* \scrR}.
% 	\]
% 	Because $\scrR^* \scrR = \sum_{i=1}^n k^R_{x_i} \otimes k^R_{x_i}$ is the sum of independent random operators, we can use
% 	an infinite-dimensional matrix Bernstein inequality \parencite[Theorem 7.7.1]{Tropp2015}.
	
% 	First, note that $\normH{k^R_x \otimes k^R_x} = \normH{k^R_x}^2 = k^R(x, x) \leq D \tr \scrT_{G^\perp}$ almost surely for $x \sim \mu$.
% 	We also have
% 	\[
% 		\E (\scrR^* \scrR - \E \scrR^* \scrR)^2
% 		\preceq n \E (k^R_x \otimes k^R_x)^2 
% 		= n \E k^R(x, x) k^R_x \otimes k^R_x
% 		\preceq n (D \tr \scrT_{G^\perp}) \E k^R_x \otimes k^R_x
% 		= n D (\tr \scrT_{G^\perp}) \scrT_{G^\perp}
% 		\eqqcolon V.
% 	\]
% 	Note that $\normH{V} = n \lambda_{p+1} D \tr \scrT_{G^\perp}$.
% 	Set $d_p = \frac{\tr V}{\normH{V}} = \frac{\tr \scrT_{G^\perp}}{\lambda_{p+1}}$.
% 	The Bernstein inequality then gives that with probability at least $1 - e^{-t}$,
% 	we have
% 	\[
% 		\normH{\scrR^* \scrR - \E \scrR^* \scrR}
% 		\lesssim \sqrt{ n \lambda_{p+1} D (\tr \scrT_{G^\perp}) (\log d_p) t} +  D (\tr \scrT_{G^\perp}) (\log d_p) t.
% 	\]
% \end{proof}

\begin{proof}[Proof of \Cref{lem:R_conc_ind}]
	For $z \in \R^n$,
	we have
	\[
		\ip{\scrR \scrR^* z}{z} = \sum_{\ell > p}\lambda_\ell \ip{w_\ell}{z}^2.
	\]
	By our assumptions, this is the sum of independent random variables.
	
	If $\norm{z}_{\ell_2} = 1$,
	then, for each $\ell$, $\ip{w_\ell}{z}^2$ is sub-exponential (as the square of a sub-Gaussian variable; since the sub-Gaussian norm is bounded, so is the sub-exponential norm), $\E \ip{w_\ell}{z}^2 = 1$, and $\E \ip{w_\ell}{z}^4 \lesssim 1$.
	
	Note that in this case, $\E \ip{\scrR \scrR^* z}{z} = \tr \scrT_{G^\perp} = \ip{(\tr \scrT_{G^\perp}) I_n z}{z}$,
	and
	\begin{align*}
		\E (\ip{\scrR \scrR^* z}{z} - \E \ip{\scrR \scrR^* z}{z})^2
		&= \sum_{\ell > p} \lambda_\ell^2 \E (\ip{w_\ell}{z}^2 - \E \ip{w_\ell}{z}^2)^2 \\
		&\lesssim \sum_{\ell > p} \lambda_\ell^2.
	\end{align*}
	A Bernstein inequality then implies that for $t > 0$, with probability at least $1 - e^{-t}$, we have
	\[
		\abs{  \ip{\scrR \scrR^* z}{z} - \tr \scrT_{G^\perp} } \lesssim \sqrt{ \parens*{ \sum_{\ell > p} \lambda_\ell^2 } t} + \lambda_{p+1} t.
	\]
	By a standard covering argument (e.g., \cite[Exercise 4.4.3]{Vershynin2018}),
	we then obtain, with probability at least $1 - e^{-t}$,
	\[
		\max_{z \in S^{{n-1}}} \abs{  \ip{\scrR \scrR^* z}{z} - \tr \scrT_{G^\perp} }
		\lesssim \sqrt{ \parens*{ \sum_{\ell > p} \lambda_\ell^2 } (n + t)} + \lambda_{p+1} (n+t),
	\]
	where $S^{n-1}$ is the unit sphere in $\R^n$.
\end{proof}

\section{TIGHTNESS OF GENERAL FEATURE RESULTS}
\label{sec:FourierExample}
With no independence assumptions on the features $\{v_{\ell}(x)\}_{\ell}$, our general results require $d \gtrsim n^2$ in order to upper and lower bound the residual Gram matrix $\scrR \scrR^*$ by constant multiples of the identity. The following theorem shows that for Fourier features, $d \gtrsim n^2$ is in fact a necessary condition, i.e. if $d = o(n^2)$, then the condition number of $\scrR\scrR^*$ grows as $n \to \infty$. 

\begin{theorem}
Consider the case of Fourier features with bi-level eigenvalues, i.e. $v_{\ell} \in L_2([0,1])$ for $\ell = -d,\ldots,d$, which are defined by $v_{\ell}(x) = e^{\mathbf{j}2\pi \ell x}$ for $x \in [0,1]$, and $\lambda_{\ell} = 1$ for $|\ell| \le p$, $\lambda_{\ell} = \gamma \in (0,1)$ for $p < |\ell| \le d$.  Then, for any constant $\tau > 0$, the residual Gram matrix $\scrR\scrR^*$ satisfies $$\dfrac{\lambda_{\text{max}}(\scrR\scrR^*)}{\lambda_{\text{min}}(\scrR\scrR^*)} \gtrsim \dfrac{n^4}{\tau^2d^2}$$ with probability at least $1-e^{-\tau}$. 
\end{theorem}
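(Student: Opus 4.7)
The plan is to exhibit, with probability at least $1 - e^{-\tau}$, a $2 \times 2$ principal submatrix of $\scrR \scrR^*$ whose condition number is $\gtrsim n^4/(\tau^2 d^2)$, and then invoke Cauchy interlacing to transfer the bound to the full matrix. The underlying intuition is that the residual Fourier kernel $\gamma(D_d - D_p)$ varies on the spatial scale $1/d$, while a birthday-type argument puts two of the $n$ uniform samples on $[0,1]$ at distance $O(\tau/n^2)$; when $d = o(n^2/\tau)$, the two corresponding rows of $\scrR \scrR^*$ are nearly identical, so the corresponding $2 \times 2$ principal block is close to rank $1$.

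\textbf{Step 1 (two close sample points).} Using the exact joint density of the gaps between order statistics of $n$ i.i.d.\ uniform samples on $[0,1]$,
$$\P\bigl(\min_{a \ne b}|x_a - x_b| > \delta\bigr) = (1 - (n-1)\delta)^n \le e^{-n(n-1)\delta}.$$
Choosing $\delta = \tau/[n(n-1)] \le 2\tau/n^2$ produces an event $\mathcal{E}$ of probability at least $1 - e^{-\tau}$ on which some pair $i \ne j$ satisfies $t := |x_i - x_j| \le 2\tau/n^2$.

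\textbf{Step 2 (near-rank-$1$ submatrix).} The entries of $\scrR \scrR^*$ are $(\scrR \scrR^*)_{ab} = \gamma[D_d(x_a - x_b) - D_p(x_a - x_b)]$, and the Dirichlet kernel $D_m(s) = 1 + 2\sum_{k=1}^{m}\cos(2\pi k s)$ admits the Taylor expansion
$$D_m(s) = (2m+1) - \tfrac{4\pi^2}{3}\,m^3 s^2 + O(m^5 s^4), \qquad |ms| \lesssim 1.$$
Subtracting and using $d^3 - p^3 \le 3(d-p)d^2$, the principal submatrix $M$ of $\scrR \scrR^*$ on rows and columns $\{i,j\}$ takes the form
$$M = 2\gamma(d-p)\begin{pmatrix} 1 & 1-\eta \\ 1-\eta & 1 \end{pmatrix}, \qquad 0 \le \eta \le C\,\frac{d^2 \tau^2}{n^4},$$
for some absolute constant $C > 0$, provided $d\tau/n^2$ is bounded by a universal constant (otherwise the claimed bound $\lambda_{\max}/\lambda_{\min} \gtrsim n^4/(\tau^2 d^2)$ is $\lesssim 1$ and holds trivially). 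The eigenvalues of this $M$ are $2\gamma(d-p)(2-\eta)$ and $2\gamma(d-p)\eta$, giving $\lambda_{\max}(M)/\lambda_{\min}(M) = (2-\eta)/\eta \gtrsim n^4/(\tau^2 d^2)$.

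\textbf{Step 3 (Cauchy interlacing) and main obstacle.} Since $M$ is a principal submatrix of the PSD matrix $\scrR \scrR^*$, Cauchy interlacing yields $\lambda_{\max}(\scrR \scrR^*) \ge \lambda_{\max}(M)$ and $\lambda_{\min}(\scrR \scrR^*) \le \lambda_{\min}(M)$, so on $\mathcal{E}$ we conclude $\lambda_{\max}(\scrR \scrR^*)/\lambda_{\min}(\scrR \scrR^*) \gtrsim n^4/(\tau^2 d^2)$. The main technical obstacle is Step 2: the $O(m^3 s^2)$ term in the Taylor expansion must be retained to pin down $\lambda_{\min}(M)$ (it is precisely what prevents $M$ from being exactly rank $1$), while the $O(m^5 s^4)$ remainder must be controlled finely enough to preserve the sharp $d^2\tau^2/n^4$ scaling for $\eta$; everything else is straightforward bookkeeping.
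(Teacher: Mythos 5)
Your proof is correct and follows essentially the same strategy as the paper's: exhibit two close sample points via the gap-spacing calculation $\P(\min_{a\ne b}|x_a-x_b| > \delta) = (1-(n-1)\delta)^n$, then inspect the resulting near-singular $2\times 2$ principal submatrix. The two minor variations are that you invoke Cauchy interlacing for both the max and min eigenvalue (the paper gets $\lambda_{\max}(\scrR\scrR^*)$ from the cruder but equally effective trace bound $\lambda_{\max} \ge \frac{1}{n}\tr(\scrR\scrR^*) = 2(d-p)\gamma$), and you frame the Dirichlet kernel estimate as a two-term Taylor expansion with an $O(m^5 s^4)$ remainder, whereas the paper uses the one-sided inequality $2\cos\theta \ge 2-\theta^2$ applied termwise.

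The one point worth correcting is your identification of the ``main technical obstacle.'' You flag that the $O(m^5s^4)$ remainder ``must be controlled finely enough to preserve the sharp $d^2\tau^2/n^4$ scaling for $\eta$,'' but this is not actually an obstacle. You only need an \emph{upper} bound on $\eta$, i.e.\ a \emph{lower} bound on the off-diagonal entry $\gamma\sum_{p<|\ell|\le d}\cos(2\pi\ell t)$, and the exact inequality $\cos\theta \ge 1 - \theta^2/2$ (valid for all real $\theta$) delivers this lower bound with no remainder to track: $\gamma\bigl(D_d(t)-D_p(t)\bigr) \ge 2\gamma(d-p) - 4\pi^2\gamma\bigl(\sum_{\ell=p+1}^d \ell^2\bigr) t^2 \ge 2\gamma(d-p)\bigl(1 - 2\pi^2 d^2 t^2\bigr)$. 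In other words, the higher-order Taylor terms are non-negative in aggregate, so dropping them only helps. This is precisely how the paper proceeds, and it makes Step 2 entirely elementary rather than a delicate remainder estimate.
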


Intuitively, if there exist distinct indices $i, i' = 1,\ldots,n$ such that $x_i$ and $x_{i'}$ are very close together, then the $i$-th and $i'$-th columns (and rows) of $\scrR\scrR^*$ are nearly identical, and thus, $\scrR\scrR^*$ is nearly rank-deficient. We now make this argument rigorous.

\begin{proof}
First, pick any two indices $i,i' \in \{1,\ldots,n\}$ with $i \neq i'$ and consider the $2\times 2$ submatrix of $\scrR\scrR^*$ formed by the $i$-th and $i'$-th rows and columns, i.e, $$(\scrR\scrR^*)_{\text{sub}} := \begin{bmatrix}k^R(x_i,x_i) & k^R(x_i,x_{i'}) \\ k^R(x_{i'},x_i) & k^R(x_{i'},x_{i'}) \end{bmatrix}.$$ 

The kernel restricted to $G^{\perp}$ is given by \begin{align*}k^R(x,y) &= \sum_{p < |\ell| \le d}\lambda_{\ell}v_{\ell}(x)\overline{v_{\ell}(y)} 
\\
&= \sum_{p < |\ell| \le d}\gamma e^{\mathbf{j}2\pi\ell(x-y)}
\\
&= \gamma\dfrac{\sin[(2d+1)\pi(x-y)]-\sin[(2p+1)\pi(x-y)]}{\sin[\pi(x-y)]}.\end{align*} Hence, $k^R(x_i,x_i) = k^R(x_{i'},x_{i'}) = 2(d-p)\gamma$. 

Furthermore, using the inequality $2\cos\theta \ge 2-\theta^2$ for $\theta \in \R$, we have \begin{align*}
\dfrac{\sin[(2d+1)\pi t]-\sin[(2p+1)\pi t]}{\sin[\pi t]} &= \sum_{p < |\ell| \le d}e^{\mathbf{j}2\pi\ell t} 
\\
&= \sum_{\ell = p+1}^{d}2\cos(2\pi\ell t) 
\\
&\ge \sum_{\ell = p+1}^{d}\left[2-(2\pi\ell t)^2\right] 
\\
&= 2(d-p)-4\pi^2\left(\sum_{\ell = p+1}^{d}\ell^2\right)t^2
\\
&\ge 2(d-p)-4\pi^2d^2(d-p)t^2
\end{align*} for all $t \in \R$, and thus, \begin{align*}
k^R(x_i,x_{i'}) = k^R(x_{i'},x_i) &= \gamma\dfrac{\sin[(2d+1)\pi(x_i-x_{i'})]-\sin[(2p+1)\pi(x_i-x_{i'})]}{\sin[\pi(x_i-x_{i'})]} 
\\
&\ge 2(d-p)\gamma - 4\pi^2d^2(d-p)\gamma(x_i-x_{i'})^2.
\end{align*} We can then bound the smallest eigenvalue of $\scrR\scrR^*$ by  $$\lambda_{\text{min}}(\scrR\scrR^*) \le \lambda_{\text{min}}((\scrR\scrR^*)_{\text{sub}}) = k^R(x_i,x_{i}) - k^R(x_i,x_{i'}) \le 4\pi^2d^2(d-p)\gamma(x_i-x_{i'})^2.$$ Then, by using the trivial bound $\lambda_{\text{max}}(\scrR\scrR^*) \ge \tfrac{1}{n}\tr(\scrR\scrR^*) = \tfrac{1}{n} \cdot 2(d-p)\gamma n = 2(d-p)\gamma$, we have $$\dfrac{\lambda_{\text{max}}(\scrR\scrR^*)}{\lambda_{\text{min}}(\scrR\scrR^*)} \ge \dfrac{2(d-p)\gamma}{4\pi^2d^2(d-p)\gamma(x_i-x_{i'})^2} = \dfrac{1}{2\pi^2d^2(x_i-x_{i'})^2}.$$

This bound holds for any distinct indices $i \neq i'$. A relatively straightforward calculation\footnote{Thanks to Hans's answer at \url{https://mathoverflow.net/questions/1294}} shows that if $x_1,\ldots,x_n$ are i.i.d. $\text{Uniform}[0,1]$, then for any $\delta \in (0,\tfrac{1}{n-1})$, \begin{align*}
\P\left\{ |x_i-x_{i'}| \ge \delta \ \text{for all} \ i \neq i'\right\} &= n! \P\left\{x_{i-1}+\delta \le x_i \ \text{for all} \ i = 2,\ldots,n\right\}
\\
&= n! \int \cdots \int_{\{0 \le x_1, \ x_{i-1} + \delta \le x_i \ \text{for} \ i = 2,\ldots, n, \ x_n \le 1\}} \,dx_1\cdots\,dx_n
\\
&= n! \int \cdots \int_{\{y_i \ge 0 \ \text{for} \ i = 1,\ldots,n, \ y_1+\cdots+y_n \le 1-(n-1)\delta\}} \,dy_1\cdots\,dy_n
\\
&= n! \cdot \dfrac{1}{n!}(1-(n-1)\delta)^n
\\
&= (1-(n-1)\delta)^n
\end{align*} where we made the change of variable $y_1 = x_1$ and $y_i = x_i-x_{i-1}-\delta$ for $i = 2,\ldots,n$, and we used the fact that the volume of the standard $n$-simplex is $\tfrac{1}{n!}$. Hence, if $0 < \delta < \tfrac{1}{n-1}$, the probability that $|x_i-x_{i'}| \le \delta$ for some indices $i \neq i'$ is $1-(1-(n-1)\delta)^n$.

If $0 < \tau < n$, we can apply this result for $\delta = \tfrac{\tau}{n(n-1)}$, to obtain that with probability $1-(1-\tfrac{\tau}{n})^n \ge 1-e^{-\tau}$ there exist $i \neq i'$ such that $|x_i-x_{i'}| \le \tfrac{\tau}{n(n-1)}$, and thus, $$\dfrac{\lambda_{\text{max}}(\scrR\scrR^*)}{\lambda_{\text{min}}(\scrR\scrR^*)} \ge \dfrac{1}{2\pi^2d^2(x_i-x_{i'})^2} \ge \dfrac{n^2(n-1)^2}{2\pi^2d^2\tau^2} \gtrsim \dfrac{n^4}{\tau^2d^2}.$$ If $\tau \ge n$, then it is guaranteed that there exist two indices $i \neq i'$ which satisfy $|x_i-x_{i'}| \le \tfrac{1}{n-1} \le \tfrac{\tau}{n(n-1)}$, and the same bound holds.
\end{proof}

\section{PROOF OF BI-LEVEL ENSEMBLE ASYMPTOTIC RESULTS}
\label{app:bilevel_proof}

If $\beta > 2$ and $r < 1$, the concentration results \Cref{lem:C_conc_BOS,lem:R_conc_generic} will hold as $n$ becomes large,
since we will have $n \gg p \log p$ and $d - p \approx n^\beta \gg n^2$.
We now apply~\Cref{lem:approx_error} and~\Cref{thm:reg_var_deterministic} to the bi-level ensemble.
Since we are in the interpolating regime, we take $\alpha = 0$; then, $\alpha_L = \lambda_{\min}(\scrR \scrR^*)$ and $\alpha_U = \lambda_{\max}(\scrR \scrR^*)$ are the smallest and largest eigenvalues of $\scrR \scrR^*$.
As long as $\alpha_L$ and $\alpha_U$ are close together (which we will analyze next),
we will have
\[
	\alphatl \approx \alphabr \approx \sum_{\ell > p} \lambda_\ell \approx n^\beta \cdot n^{-(\beta - r - q)} = n^{r + q}.
\]
Furthermore,
\[
    \sum_{\ell > p} \lambda_\ell^2 \approx n^\beta n^{-2(\beta - q - r)} = n^{2q + 2r -\beta}.
\]
Applying these scalings to~\Cref{thm:reg_var_deterministic} gives us
\[
\E_\xi \norm{\epsilon}_{L_2}^2 \lesssim n^{r - 1} + \frac{n}{n^{2(r + q)}} n^{2q + 2r - \beta} = n^{r-1} + n^{1-\beta}.
\]

To bound the bias,
note that combining the above calculations with \Cref{lem:R_conc_generic} gives
\begin{align*}
	\frac{\alpha_U - \alpha_L}{\alpha_U + \alpha_L}
	&\lesssim \frac{1}{\alphabr} \sqrt{n^2 \sum_{\ell > p} \lambda_\ell^2} \\
	&\approx \frac{1}{n^{r+q}} \sqrt{n^2 n^{2q + 2r - \beta}} \\
	&= n^{1 - \beta/2}.
\end{align*}
Combining this with \Cref{lem:C_conc_BOS},
the quantity $c$ in \Cref{thm:reg_bias_deterministic,lem:approx_error} can be bounded as
\[
    c \lesssim n^{1 - \beta/2} + n^{(r-1)/2} \sqrt{\log n}.
\]
Then \Cref{lem:approx_error} gives
\begin{align*}
    \frac{\norm{\etahat_0 - \scrSbr \eta^*}_{L_2}}{\normH{\eta^*}}
    &\lesssim \parens*{ n^{1 - \beta/2} + n^{(r-1)/2} \sqrt{\log n} + \sqrt{\frac{n^{-(\beta - r - q)} n}{n^{r + q}}}}\cdot \min\braces*{ 1,  n^{r + q - 1}, n^{(r + q - 1)/2} } \\
    &\lesssim \parens*{ n^{1 - \beta/2} + n^{(r-1)/2} \sqrt{\log n}} \min\{1, n^{r + q - 1}\}.
\end{align*}

Recall from \eqref{eq:riskboundratio} that excess classification risk has upper bound $\scrE \leq \frac{\norm{\etahat_r}_{L_2}}{s}$ for any decomposition $\etahat_0 = s \eta^* + \etahat_r$ with an $s > 0$ that we can choose.
We will now characterize the terms $s$ and $\norm{\etahat_r}_{L_2}$, beginning with the factor $s$.
The ideal survival operator is given by
\[
\scrSbr
= \scrI_G - \parens*{ \scrI_G + \frac{n}{\alphabr} \scrT_G }^{-1}
= \frac{1}{1 + \alphabr n} \scrI_G \approx \frac{1}{1 + n^{r+q-1}} \scrI_G.
\]

Then, we can decompose
\[
\etahat = \scrSbr \eta^* + \etahat_r \approx \frac{1}{1 + n^{r+q - 1}} \eta^* + \etahat_r,
\]
where $\etahat_r = \epsilon + \etahat_0 - \scrSbr \eta^*$.
This gives us $s \approx \frac{1}{1 + n^{r+q - 1}}$.

Next, we bound $\norm{\etahat_r}_{L_2}$.
We have
\begin{align*}
	\norm{\etahat_r}_{L_2}
	&\lesssim n^{(r-1)/2} + n^{(1-\beta)/2} + \parens*{ n^{1 - \beta/2} + n^{(r-1)/2} \sqrt{\log n}} \cdot \min\{1, n^{r + q - 1}\} \norm{\eta^*}_{L_2}.
\end{align*}
Above, we used the fact that $\norm{\eta^*}_{L_2} = \norm{\eta^*}_{\scrH}$.

There are several cases to consider (recall that we are already assuming $\beta > 2$ and $r < 1$):
\begin{enumerate}
	\item $q < 1 - r$: In this case, $s \approx \frac{1}{1 + n^{r+q - 1}} \to 1$,
	and $\norm{\etahat_r}_{L_2} \to 0$.
	Thus both the excess regression and classification risk converge to $0$ as $n \to \infty$.
	
	\item If $q > 1-r$, we have $s \to 0$ and $\norm{\etahat_r}_{L_2} \to 0$,
	so $\norm{\etahat}_{L_2} \to 0$.
	Therefore will will \emph{not} get regression consistency (for nonzero $\eta^*$).
	
	\item If $1-r < q < \frac{3}{2}(1-r)$ and $\beta > 2r + 2q$, 
	then $s \to 0$,
	but $\frac{\norm{\etahat_r}_{L_2}}{s} \approx \norm{\etahat_r}_{L_2} \cdot (1 + n^{r+q - 1}) \to 0$,
	so the excess classification risk converges to zero as $n \to \infty$ even though the regression risk does not.
	
	\item If $1-r < q < \frac{3}{2}(1-r)$ but $\beta < 2r + 2q$ or if $q > \frac{3}{2}(1-r)$, our analysis does not yield any convergence results.
	It is an interesting and important direction for future work to characterize precisely what relations between the parameters $q, r, \beta$ are both sufficient and necessary for classification risk to go to $0$ as $n \to \infty$.
\end{enumerate}

\section{DISTORTION ANALYSIS}
\label{sec:distortion}
In this section,
we analyze more carefully the regularization-induced distortion.
In particular, we consider how different the (deterministic) ideal survival operator $\scrSbr$ is from a multiple of the identity.
Recall that
\[
\scrSbr = \scrI_G - \parens*{\scrI_G + \frac{n}{\alphabr} \scrT_G}^{-1}
= \frac{n}{\alphabr} \scrT_G \parens*{\scrI_G + \frac{n}{\alphabr} \scrT_G}^{-1}.
\]
We want to solve
\begin{align*}
	\argmin_{s>0}~\norm*{s \scrI_G - \scrSbr}_{\scrH \to L_2}
	&= \argmin_{s>0}~\norm*{ s \scrT_G^{1/2} - \scrT_G^{1/2} \scrSbr}_{L_2} \\
	&= \argmin_{s > 0}~\max_{1 \leq \ell \leq p}~\sqrt{\lambda_\ell} \abs*{s - \frac{\lambda_\ell}{\lambda_\ell + \frac{\alphabr}{n} }}.
\end{align*}
We abbreviate $b \coloneqq \frac{\alphabr}{n}$.
The objective function in $s$ is convex as the maximum of convex functions.
Some convex analysis tell us that there must be (at least) two distinct $i,j \in \{1, \dots, p\}$
such that, for $s$ at its optimal value $s^*$,
both $i$ and $j$ achieve the maximum over $\ell$, and the arguments to the absolute value have different signs.
Assuming, without loss of generality, that $\lambda_j > \lambda_i$,
this implies
\begin{align*}
	\argmin_{s > 0}~\max_{1 \leq \ell \leq p}~\sqrt{\lambda_\ell} \abs*{s - \frac{\lambda_\ell}{\lambda_\ell + \frac{\alphabr}{n} }}
	&= \sqrt{\lambda_i} \parens*{ s^* - \frac{\lambda_i}{\lambda_i + b}} \\
	&= \sqrt{\lambda_j} \parens*{\frac{\lambda_j}{\lambda_j + b} - s^*}.
\end{align*}
Note that the last expression is increasing in $\lambda_j$, so we can take $j = 1$.
Solving for $s^*$ gives
\[
s^* = \frac{\lambda_i \lambda_1 + b(\lambda_i + \lambda_1 - \sqrt{\lambda_i \lambda_1})}{(b + \lambda_i)(b + \lambda_1)}.
\]
Plugging this into the objective function gives
\[
\norm{s^* \scrI_G - \scrSbr}_{\scrH \to L_2}
= \max_i~ \frac{ b \sqrt{\lambda_i \lambda_1}(\sqrt{\lambda_1} - \sqrt{\lambda_i})}{(b + \lambda_1)(b+\lambda_i)}.
\]
One can check that if $\lambda_p \geq \frac{\lambda_1}{\parens*{1 + \sqrt{1 + \frac{\lambda_1}{b}}}^2}$,
this minimum is achieved for $i = p$.
Otherwise,
we can find an upper bound by optimizing over continuous $\lambda$:
\begin{align*}
	\max_i~ \frac{ b \sqrt{\lambda_i \lambda_1}(\sqrt{\lambda_1} - \sqrt{\lambda_i})}{(b + \lambda_1)(b+\lambda_i)}
	&\leq \max_{\lambda \geq 0}~\frac{ b \sqrt{\lambda \lambda_1}(\sqrt{\lambda_1} - \sqrt{\lambda})}{(b + \lambda_1)(b+\lambda)} \\
	&= \frac{b \lambda_1^{3/2}}{2(b + \lambda_1)(b + \sqrt{b(b+\lambda_1)})},
\end{align*}
where the minimum is achieved at $\lambda = \frac{\lambda_1}{\parens*{1 + \sqrt{1 + \frac{\lambda_1}{b}}}^2}$.

Whatever value of $\lambda$ we use, we then have, for the corresponding choice of $s$,
\[
\frac{\norm{s \scrI_G - \scrSbr}_{\scrH \to L_2}}{s} = \frac{b \sqrt{\lambda_1 \lambda}(\sqrt{\lambda_1} - \sqrt{\lambda})}{\lambda_1 \lambda + b(\lambda_1 + \lambda - \sqrt{\lambda_1 \lambda})}.
\]
For $\lambda = \frac{\lambda_1}{\parens*{1 + \sqrt{1 + \frac{\lambda_1}{b}}}^2}$,
we get
\[
\frac{\norm{s \scrI_G - \scrSbr}_{\scrH \to L_2}}{s}
\leq \frac{\sqrt{b \lambda_1(b + \lambda_1)}}{2b + 2\lambda_1 + \sqrt{b(b+\lambda_1)}}.
\]
If $\frac{\alphabr}{n} = b \gtrsim \lambda_1$,
then this last bound is approximately $\sqrt{\lambda_1} \approx \norm{\scrI_G}_{\scrH \to L_2}$,
so there appears to be little hope of getting small classification error from this bound.

Alternatively, if $\frac{\alphabr}{n} \ll \lambda_1$,
we get
\[
\frac{\norm{s \scrI_G - \scrSbr}_{\scrH \to L_2}}{s} \lesssim \sqrt{\frac{\alphabr}{n}}.
\]
However, recall that the \emph{regression} error is of the same order,
so this analysis does not significantly improve our classification risk.

Therefore, the only regime in which we gain anything over the regression analysis is when $\lambda_p > \frac{\lambda_1}{\parens*{1 + \sqrt{1 + \frac{\lambda_1}{b}}}^2}$.

If $b \gtrsim \lambda_1$, then this constraint implies that $\lambda_p/\lambda_1$ is not very small.
Furthermore,
\[
\frac{\norm{s^* \scrI_G - \scrSbr}_{\scrH \to L_2}}{s^*} \approx \sqrt{\frac{\lambda_p}{\lambda_1}}(\sqrt{\lambda_1} - \sqrt{\lambda_p}).
\]
Since $\lambda_p$ is not too small, this ratio is only small when $\lambda_1$ and $\lambda_p$ are very close together.

If $b \lesssim \lambda_1$,
the constraint implies $\lambda_p \gtrsim b$.
Then
\[
\frac{\norm{s^* \scrI_G - \scrSbr}_{\scrH \to L_2}}{s^*} \approx \frac{b}{\sqrt{\lambda_1 \lambda_p}} (\sqrt{\lambda_1} - \sqrt{\lambda_p}).
\]
This is better than the previous case when $b$ is small,
and it improves over the regression error bound when $\lambda_1$ and $\lambda_p$ are close.
However, note that in this case we get $c \gtrsim 1$, so unless $\lambda_p$ is very close to $\lambda_1$,
there is no significant improvement over regression error.

\end{document}